\newtheorem{assumption}{Assumption}
\newtheorem{proposition}{Proposition}
\begin{document}
\title{Efficient Onboard Vision-Language Inference in UAV-Enabled Low-Altitude Economy Networks via LLM-Enhanced Optimization}

\author{Yang~Li, Ruichen~Zhang, Yinqiu~Liu, Guangyuan~Liu, Abbas~Jamalipour,~\IEEEmembership{Fellow,~IEEE,}
Xianbin~Wang,~\IEEEmembership{Fellow,~IEEE,}
and Dong In Kim,~\IEEEmembership{Fellow,~IEEE,}
\thanks{Y. Li and G. Liu are with the College of Computing and Data Science, the Energy Research Institute @ NTU, Interdisciplinary Graduate Program, Nanyang Technological University, Singapore (e-mail: yang048@e.ntu.edu.sg; liug0022@e.ntu.edu.sg).
} 
\thanks{R. Zhang, and Y. Liu are with the College
of Computing and Data Science, Nanyang Technological University,
Singapore (e-mails: ruichen.zhang@ntu.edu.sg; yinqiu001@e.ntu.edu.sg)}
\thanks{A. Jamalipour is with The University of Sydney, Sydney, Australia (e-mail: a.jamalipour@ieee.org).}
\thanks{X. Wang is with the Department of Electrical and
Computer Engineering, Western University, London, Canada (e-mail: xianbin.wang@uwo.ca).}
\thanks{D. I. Kim is with the Department of Electrical and Computer Engineering, Sungkyunkwan University, South Korea (email: dongin@skku.edu).}
}

\maketitle
\begin{abstract}
The rapid advancement of Low-Altitude Economy Networks (LAENets) has enabled a variety of applications, including aerial surveillance, environmental sensing, and semantic data collection. To support these scenarios, unmanned aerial vehicles (UAVs) equipped with onboard vision-language models (VLMs) offer a promising solution for real-time multimodal inference. However, ensuring both inference accuracy and communication efficiency remains a significant challenge due to limited onboard resources and dynamic network conditions. In this paper, we first propose a UAV-enabled LAENet system model that jointly captures UAV mobility, user-UAV communication, and the onboard visual question answering (VQA) pipeline. Based on this model, we formulate a mixed-integer non-convex optimization problem to minimize task latency and power consumption under user-specific accuracy constraints. To solve the problem, we design a hierarchical optimization framework composed of two parts: (i) an Alternating Resolution and Power Optimization (ARPO) algorithm for resource allocation under accuracy constraints, and (ii) a Large Language Model-augmented Reinforcement Learning Approach (LLaRA) for adaptive UAV trajectory optimization.
The large language model (LLM) serves as an expert in refining reward design  of reinforcement learning in an offline fashion, introducing no additional latency in real-time decision-making. Numerical results demonstrate the efficacy of our proposed framework in improving inference performance and communication efficiency under dynamic LAENet conditions.

\end{abstract}

\begin{IEEEkeywords}
Low-altitude economy networks, vision-language model, unmanned aerial vehicles, trajectory optimization, large language model, and deep reinforcement learning.
\end{IEEEkeywords}

\IEEEpeerreviewmaketitle

\section{Introduction}\label{sec:introduction}
Low-Altitude Economy Networks (LAENets) have recently garnered growing attention as a novel paradigm that leverages the low-altitude airspace (typically below 1000 meters) to deliver digital services~\cite{cai2025large}. Their primary goal is to unlock commercial and societal benefits through flexible aerial operations.
Specifically, a typical LAENet refers to an integrated network composed of low-altitude aerial platforms, such as general aviation aircraft, Unmanned Aerial Vehicles (UAVs), and electric Vertical Take-Off and Landing (eVTOL) aircraft. Compared to ground-based systems, these intelligent platforms offer unique advantages of high mobility and adaptive deployment, making LAENets suitable for diverse services, including wireless communication, environmental sensing, and edge computing~\cite{he2025satellite,cai2025secure,wang2025toward}.  Thereafter, 
LAENets are expected to play a significant role in supporting the pervasive services envisioned for future 6G networks. This potential is also reflected in the rapid growth of the LAE economy. For instance, the Civil Aviation Administration of China claims that the country's low-altitude market is expected to grow from \$70 billion in 2023 to \$200 billion by 2025, and reach \$480 billion by 2035~\cite{china2024lowaltitude}.

\textbf{UAV-enabled LAENets.} 
Among various LAENet platforms, UAV-based networks stand out due to their ability to perform intelligent tasks in complex low-altitude environments. 
Equipped with onboard sensors and processors, UAVs could function as flying agents that are capable of executing entire mission lifecycles autonomously without additional ground support. 
This autonomy also reduces reliance on the communication between UAV and base station, enabling service delivery in resource-constrained or infrastructure-sparse environments. 
As a result, recent works have highlighted the potential of UAV-enabled LAENets for different applications, including aerial surveillance~\cite{he2025ubiquitous}, disaster response~\cite{khan2022emerging}, and autonomous delivery~\cite{wang2025toward}. In addition, with advances in battery technology and AI-powered analytics, UAVs are promising to provide intelligent services with longer flight times, wider coverage, and enhanced analytical capabilities. These features align well with the service requirements of LAENets.

\textbf{Integrating VLMs into UAV-enabled LAENets.} 
Recent advances in vision-language models (VLMs), e.g., LLaVA~\cite{liu2023visual}, can strengthen UAV perception and reasoning, enabling applications such as object detection~\cite{yao2024vision} and geo-localization~\cite{wu2025clip}.
% and navigation~\cite{cai2025flightgpt}.
% VLMs excel at visual understanding and language following, performing well across diverse multimodal tasks.
Hence, embedding these models onboard can offer a promising avenue to provide real-time, high-quality \textit{inference-as-a-service} to ground users within LAENets. Moreover, large VLMs have exhibited robust zero-shot generalization, eliminating the need for task-specific fine-tuning~\cite{li2023blip}. 
This makes them especially suitable for practical LAE scenarios, where ground users may request different types of inference on demand. By deploying a single VLM, UAVs can handle diverse tasks without switching models, reducing memory usage and improving overall efficiency. 
Notably, Zhao~\emph{et~al.}\cite{zhao2025general} demonstrated the onboard deployment of a 14B-parameter DeepSeek-R1~\cite{guo2025deepseek} on a medium-size UAV and achieved an inference speed of 5–6 tokens/sec for task planning. This provides a practical paradigm for integrating VLMs of comparable scale, such as LLaVA, into UAVs for aerial inference-as-a-service. Similar directions have also been explored in vehicular networks, where embodied AI frameworks integrate VLMs with reinforcement learning to enhance semantic communication and decision-making~\cite{11049053}.

\textbf{VLM Task-Driven Optimization in LAENets.}
Deploying VLMs on UAVs enables rich inference but strains communication and computing resources.
Effective resource allocation is thereafter essential to the utilization of limited resources in LAENets to improve system performance.
Unlike generic onboard computations, VLM inference couples perception quality with system performance: 
increasing input image resolution typically improves task accuracy but with diminishing returns that saturate beyond a task-dependent threshold, and it also incurs transmission overhead and lengthens model runtime~\cite{luo2024feast}.
This interplay of communication and computation performance as well as inference accuracy makes the optimization problem different from conventional UAV resource allocation, where objectives are typically confined to throughput or energy. 
Hence, a practical framework should explicitly model and optimize the accuracy–efficiency trade-off unique to VLM inference. 

% This motivates our unique problem formulation and solution.

\textbf{Challenges.} 
To summarize, while deploying VLMs on UAVs for agentic AI services offers great potential, it also introduces several major challenges for effective resource allocation:
% While deploying VLMs on UAVs for aerial inference services offers great potential, it also introduces several practical challenges, especially in the context of LAENets:
\begin{enumerate}
    \item 
    Unlike traditional UAV tasks focusing on throughput or sensing coverage, VLM services require a holistic model that jointly captures image resolution selection, transmission delay, and model processing time, all of which jointly determine overall service performance. Such modeling is challenging as it must integrate communication, computation, and AI inference into a unified system model.
    \item 
    UAVs often operate under limits of flight regulations, communication resources, and onboard computing power. These constraints are critical when serving multiple users with diverse demands, such as different accuracy and latency requirements. How to formulate an optimization problem effectively managing the resources while ensuring service efficiency remains a key challenge.
    \item  
    Based on Challenges 1) and 2), the resulting optimization problem would inevitably involve heterogeneous variables, including discrete image resolutions, continuous transmit power, and dynamic UAV trajectories. Such a problem is typically mixed-integer, non-convex, and high-dimensional, which makes conventional optimization infeasible and necessitates new solution paradigms.
\end{enumerate}

\textbf{Our solution.}
To address those challenges, we formulate a joint optimization problem for LAENets that accounts for the accuracy–efficiency trade-off of VLM tasks.
Then, we develop a hierarchical framework to alternatively optimize the variables. 
Specifically, we first optimize image resolution and transmit power with standard solver-based methods. This is due to their efficiency in yielding the optimal solution for a finite mixed-integer subproblem whose continuous part (power allocation) is convex. We then optimize the UAV trajectory with deep reinforcement learning (DRL), since trajectory planning is a long-horizon problem that requires continuous optimization under dynamic channels and mobility constraints. Notably, we introduce the LLM as an \textit{offline reward designer} in our DRL setting. Traditionally, manually crafted reward functions in DRL are often heuristic, relying on ad hoc rules that struggle to adapt to dynamic system conditions. In contrast, the LLM leverages its reasoning and domain knowledge to automatically generate and iteratively refine reward functions that integrate multiple objectives in a principled manner. 
This automated design not only mitigates human bias but also has the potential to uncover latent optimization objectives, improving convergence speed and the overall effectiveness of DRL policies.

\textbf{Contribution.} The major contribution of this paper is listed as follows, each directly targeting one challenge:

\begin{itemize}
    \item \textbf{System Modeling in LAENet:}
     For Challenge 1), we propose a UAV-enabled LAENet that leverages VLMs for onboard inference. We then formulate a unified system model that captures the coupling between UAV trajectory, communication channels, and the onboard VLM inference pipeline, providing the foundations for holistic performance assessment.
    \item \textbf{Optimization Problem Formulation:}
    For Challenge 2), we formulate a mixed-integer problem that jointly optimizes UAV trajectory, user transmit power, and discrete image resolution to minimize the worst-case task latency while keeping power usage efficiency. 
    In addition, we empirically observe how image resolution affect: (i) task accuracy, (ii) model inference speed, and (iii) payload size, and convert observations into lookup tables. In the optimizer, these lookup tables encode the accuracy–efficiency trade-off and guide resource-allocation decisions, enabling the system to meet diverse user needs (e.g., different accuracy requirements) under limited system resources.
    \item \textbf{Hierarchical Optimization Framework:} For Challenge 3), we design a hierarchical framework that decouples the proposed problem into two subproblems for the solutions. The first subproblem on image resolution and power allocation is solved by an Alternating Resolution and Power Optimization (ARPO) algorithm. The second subproblem on UAV trajectory is tackled by an LLM-augmented Reinforcement learning Approach (LLaRA). Specifically, the LLM serves as an offline reward design expert to improve DRL for better convergence and more stable trajectory policies.
\end{itemize}

The remainder of the paper is structured as follows: Section~\ref{sec:Literature_review} reviews the related works; Section~\ref{sec:system_model} introduces the system model and problem formulation; Sections~\ref{sec:ARPO} and \ref{sec:LLaRA} detail the proposed ARPO and LLaRA methods. The complexity analysis is in Section~\ref{sec:complexity}. Simulation results are in Section~\ref{sec:experiments}. Finally, Section~\ref{sec:conclusion} concludes the whole paper.

\section{Related Work}\label{sec:Literature_review}
This section surveys recent work: Section~\ref{subsec:LAE} on LAENets, Section~\ref{subsec:VLM_edge} on VLMs for edge inference, and Section~\ref{subsec:LLM_DRL} on LLM-enhanced DRL methods.

\subsection{LAE Networks}\label{subsec:LAE}
In recent years, both the conception and core technologies of LAENets have been widely discussed in a series of studies~\cite{he2025satellite,wang2025toward,jiang20236g}.
For instance, He~\emph{et al.}\cite{he2025satellite} investigated the utilization of satellite technology for providing ubiquitous connectivity and enhancing communication, control, and computation in LAENets through advanced architectures and optimization schemes. 
Wang~\emph{et al.}\cite{wang2025toward} highlighted the synergy of communication, sensing, computing, and control technologies as a key driver for advancing LAE networks. Jiang~\emph{et al.}\cite{jiang20236g} reviewed the use of sensing‑communication integration and discussed prerequisite technologies, e.g., network coverage and aircraft detection, enhancing awareness in LAENets. 
% In addition, the techno-economic dimensions of LAE deployment were emphasized in Jin~\emph{et al.}\cite{jin2025evolution}, identifying policy frameworks for accelerating UAV-based services in China.
% Furthermore, Song~\emph{et al.}\cite{song2024overview} and Chen~\emph{et al.}\cite{chen2025full} proposed cellular and full‑duplex integrated sensing, communication, and computation (ISCC) frameworks, respectively, for improving network performance while facilitating real‑time awareness of the physical environment.

In addition to conceptual discussions, recent studies also develop system‑level solutions to improve the efficiency of LAENets. These solutions focus on diverse technical perspectives. Some works enhance communication efficiency through \textit{advanced wireless technologies and optimizations}~\cite{ahmed2025toward,salim2025energy}. For instance, Ahmed~\emph{et al.}~\cite{ahmed2025toward} studied reconfigurable intelligent surface (RIS)-assisted UAV networks and detailed strategies such as trajectory optimization and power control to enhance energy efficiency in low-altitude operations. Likewise, Salim~\emph{et al.}~\cite{salim2025energy} proposed a DRL‑based method for energy optimization in irregular RIS‑aided UAV‑assisted networks. 
% Li~\emph{et al.}~\cite{li2025cooperative} also explored cooperative jamming techniques in RIS‑UAV‑assisted wireless sensor networks to improve communication efficiency.
Some other works manage to integrate \textit{multi‑domain intelligence} with LAENets to enhance sensing and control efficiency: Yang~\emph{et al.}~\cite{yang2024embodied} proposed an embodied artificial intelligence (EAI) framework that unifies sensing, communication, computation, and control to enhance LAE efficiency. Besides, some studies address \textit{resource‑aware lifecycle management} for sustainable LAE growth, e.g., Zhou~\emph{et al.}~\cite{zhou2025unmanned} evaluated the long‑term operational and environmental impacts of UAV‑enabled services.

Despite these advances, LAENets still remain constrained in resources, particularly as they are expected to support more intelligent tasks.capabilities~\cite{10679152}.
These observations motivate our design of a more efficient LAENet.

\subsection{VLMs for Edge Inference}\label{subsec:VLM_edge}

Advanced VLMs have excelled in multimodal reasoning tasks. 
Hence, embedding VLMs into UAV has motivated a number of vision-language applications for LAENets, e.g., geo-localization~\cite{wu2025clip}, urban patrols~\cite{yuan2024patrol} and navigation~\cite{krupavs2025multimodal}. Recent vision-language-action architectures further extend these abilities to embodied and robotic control, enabling real-time inference on autonomous platforms.
We next examine optimization strategies for VLM‑based edge inference.

Some works focus on \textit{lightweight architectures and inference optimizations} to improve the efficiency of VLM deployment on edge platforms~\cite{sharshar2025vision,huang2025litevlm,luo2024feast}.
Sharshar~\emph{et al.}~\cite{sharshar2025vision} comprehensively surveyed popular strategies for making VLMs edge-compatible, including pruning, quantization, knowledge distillation, and hardware acceleration. LiteVLM~\cite{huang2025litevlm} presents an efficient VLM pipeline that leverages patch selection, token filtering, and speculative decoding techniques. It can achieve a $2.5\times$ reduction in latency without compromising task accuracy. Luo~\emph{et al.}~\cite{luo2024feast} proposed a mixture-of-resolution adaptation strategy with a dual-path design that preserves high-resolution benefits while improving efficiency, achieving nearly $3\times$ faster inference than LLaVA-1.5. Such efficiency-oriented frameworks offer valuable benchmarks for edge inference.

Moreover, some other studies investigate VLM‑based edge inference from a \textit{system‑level perspective}, focusing on adaptive orchestration and QoE optimization~\cite{sun2025disco,yang2024perllm,li2025distributed}.
For instance, Sun~\emph{et al.}~\cite{sun2025disco} proposed DiSCo, a device–server cooperative scheduler that dynamically routes LLM inference between local devices and servers to jointly optimize latency and energy; and such an idea can be readily extended to VLM scenarios.
PerLLM~\cite{yang2024perllm} presents a personalized scheduling framework that employs edge–cloud collaboration and an upper confidence bound algorithm to balance latency, energy, and service quality. Its adaptive mechanism is readily transferable to QoE‑aware VLM scheduling.
Li~\emph{et al.}~\cite{li2025distributed} proposed a distributed architecture for VLMs that addresses high computational demands by partitioning model components between edge devices and central servers. Specifically, vision components run on edge devices, while language generation runs on servers, resulting in up to a $33\%$ improvement in throughput.

Although our work does not focus on optimizing VLMs for edge inference, our LLaRA approach is significantly different to the above model‑level and system‑level solutions and can integrate them to further enhance aerial service efficiency.

\subsection{LLM-Enhanced DRL Methods}\label{subsec:LLM_DRL}

Despite its remarkable successes in various fields such as robotics, gaming, and autonomous control~\cite{11049053,10032267}, DRL still suffers from key issues including sample inefficiency, reward design difficulty, and limited language understanding~\cite{cao2024survey}. The recent emergence of advanced LLMs offers a promising way to address these issues with pre-trained knowledge and high-level general abilities. Next, we provide further details on how to leverage LLMs to enhance DRL.

To integrate LLMs into DRL, Cao~\emph{et al.}~\cite{cao2024survey} proposed a structured taxonomy categorizing LLMs into four complementary roles: \textit{information processors}, \textit{reward designers}, \textit{decision‑makers}, and \textit{generators}. Firstly, as \textit{information processors}, LLMs help extract meaningful features for downstream networks~\cite{pang2023natural} or translate natural information into formal task languages~\cite{spiegel2024informing}. Pang~\emph{et al.}~\cite{pang2023natural} proposed an inside‑out approach that trains an LLM to translate natural language instructions into task‑specific representations. Spiegel~\emph{et al.}~\cite{spiegel2024informing} developed RLang to convert natural language into Markov decision process (MDP) specifications for using prior knowledge. 

As \textit{reward designers}, LLMs leverage pre‑trained knowledge and code generation capabilities to provide implicit or explicit reward functions~\cite{kwon2023reward,ma2023eureka,xie2023text2reward}. Kwon~\emph{et al.}~\cite{kwon2023reward} streamlined reward design by prompting an LLM to act as a proxy reward function using examples and descriptions of desired behaviors. Eureka~\cite{ma2023eureka} introduces a self‑reflective algorithm to iteratively generate and refine reward functions via a coding LLM, which can achieve human‑level reward design and enable dexterous manipulation tasks. Another work, Text2Reward~\cite{xie2023text2reward}, enables the generation of shaped, dense reward functions as executable programs grounded in compact environment representations. 

As \textit{decision‑makers}, LLMs help action decisions, or provide action candidates and reference policies that guide exploration~\cite{li2022pre,shek2025option}.
Li~\emph{et al.}~\cite{li2022pre} used LLMs to combine goals and observations into sequential inputs, which improves both combinatorial generalization and out-of-distribution performance. In contrast, Shek~\emph{et al.}~\cite{shek2025option} introduced a hierarchical DRL framework that leverages LLMs to generate subgoals from task descriptions, select reusable options, and execute action-level policies, thereby improving decision-making. 

Finally, as \textit{generators}, LLMs simulate environment dynamics for RL~\cite{robine2023transformer} and provide interpretable policy explanations~\cite{silva2024towards}.
Robine~\emph{et al.}~\cite{robine2023transformer} used a Transformer‑based world model to address long‑term dependencies and achieve good performance on the Atari 100k benchmark. 
% Genie~\cite{bruce2024genie} also introduced an unsupervised generative world model that creates action‑controllable environments and learns a latent action space. 
Silva~\emph{et al.}~\cite{silva2024towards} developed an adaptive explainability framework that personalizes explanation modalities to balance user preferences and task performance, enhancing decision-making in human-AI collaboration.

Prior studies on LLMs as reward designers~\cite{kwon2023reward,ma2023eureka,xie2023text2reward} mostly target single-task robot manipulation or game control and focus less on multi-user service scenarios. In contrast, we consider an LAENet where the UAV serves multiple users concurrently. The employed LLM needs to synthesize a compact reward that aggregates multi-user information, which is more challenging.

\section{System Model}\label{sec:system_model}
This section presents an overall system model for a LAENet. We first present the task formulation model in Section~\ref{subsec:Task}, and then model the UAV trajectory, user-UAV communication, and VLM inference in Sections~\ref{subsec:UAV_Trajectory}, \ref{subsec:communication}, and \ref{subsec:inference}, respectively. 
The optimization problem is formulated in Section~\ref{subsec:problem}.

\subsection{Task Formulation}\label{subsec:Task}
We consider a UAV-assisted LAE network comprising a single UAV that serves as an intelligent aerial agent and a set of ground users denoted by $\mathcal{N}: = \{1, \ldots, N\}$. 
Our system model be readily extended to multi-UAV scenarios by assigning different UAVs to serve diverse user sets, as in~\cite{zhan2019completion}.
We assume that ground user $n$ generates a visual-language task, and the UAV equipped with onboard computation capabilities executes those inference tasks in real time. To analyze user-UAV interactions over time, we model the system as a time-slotted network operating over a finite horizon $\mathcal{T}: = \{1, \ldots, T \}$.

Fig.~\ref{fig:system_model} illustrates our system model and its working pipelines. 
Ground users need to upload their requests to the UAV for TextVQA inference services~\cite {singh2019towards}. In detail, user $n$ generates one or more queries, where each query includes a visual input (i.e., an image $\mathbf{I}_n$ with a resolution of $r_n$) and a corresponding textual prompt $\mathbf{Q}_n$. 
Specifically, the resolution $r_n$ is defined as the total number of pixels, i.e., $r_n = H_n \times W_n$, where $H_n$ and $W_n$ denote the vertical and horizontal dimensions of $\mathbf{I}_n$, respectively.
The UAV must first collect data via the corresponding wireless communication, then perform the inference using its deployed VLM, and finally return the response. 
% Accordingly, the UAV must: (i) navigate close to the user to ensure sufficient link quality; (ii) select an appropriate image resolution for transmission; and (iii) execute the multimodal inference pipeline onboard.

\begin{figure}
    \centering
    \includegraphics[width =\linewidth]{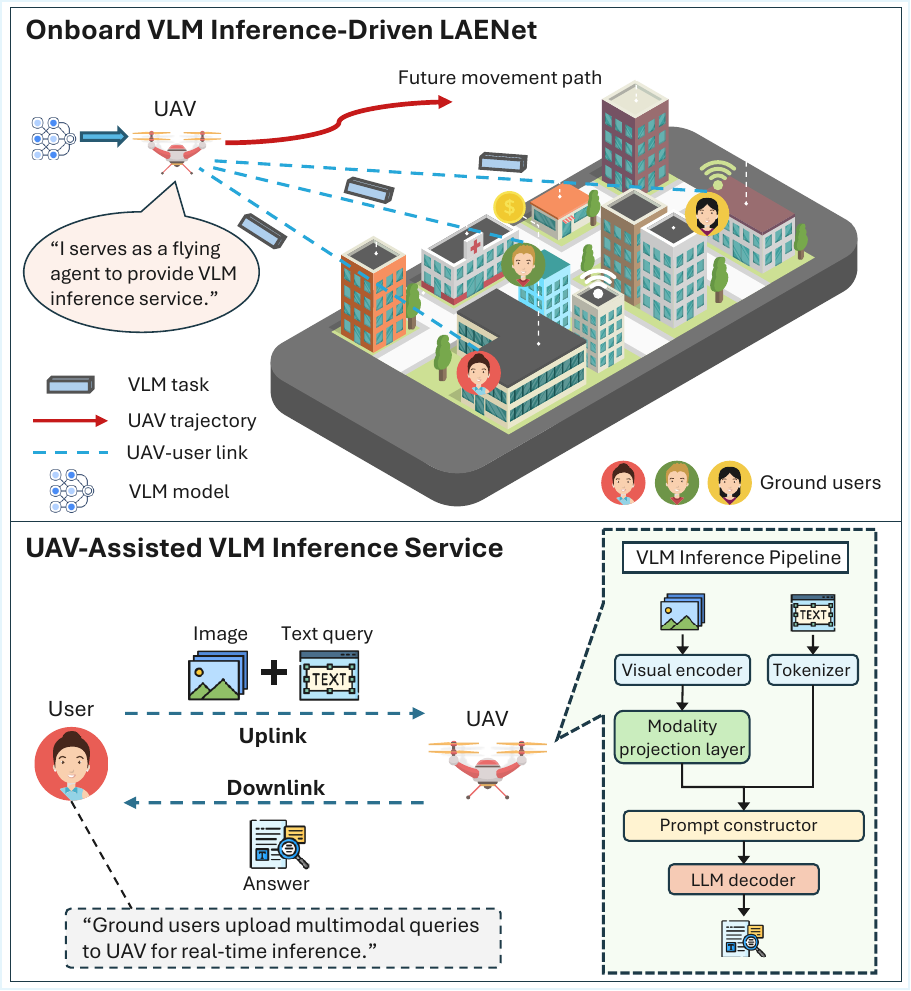}
    \caption{An overview of the onboard VLM inference-driven LAENet. The upper part depicts a UAV serving as a flying agent that providing VLM inference services to ground users; the lower part details the onboard VLM pipeline from user queries to answer generation.}
    \label{fig:system_model}
\end{figure}

% we focus on a vision-language inference scenario, where each user $n$ intends to upload a single visual task (e.g., an image for surveillance, OCR, or question-answering), which is then processed by an onboard multimodal model hosted on the UAV. The UAV is responsible for dynamically adjusting its flight trajectory, managing communication resources, and selecting appropriate image resolutions to meet task performance and delay requirements.

\subsection{UAV Trajectory}\label{subsec:UAV_Trajectory}

We consider a 3D Cartesian coordinate system and denote the UAV’s horizontal position at time slot $t$ as $(x[t], y[t], z[t])$~\cite{zhan2020joint}. We further define $\mathbf{q}[t]:=(x[t], y[t])$ to represent the horizontal coordinate.
% To simplify the analysis, we assume that the UAV maintains a constant altitude: $z[t]=z_0$ during its operation.
Let $\alpha$ denote the elemental time slot length, which is deemed to be sufficiently short such that the distances between the UAV and users remain approximately constant within each slot $t$.
The UAV's flight trajectory is discretized and represented in the set $\{(\mathbf{q}[t],z[t])\mid t \in \{ 1, \ldots, T \}\}$, with the continuous path approximated by connecting these discrete waypoints via line segments. To comply with aerial regulations, the UAV is required to operate within a specified altitude range. Hence, the UAV’s altitude must satisfy the following constraint:
\begin{align}
h^{\min} \leq z[t] \leq h^{\max}, \quad \forall t.
\end{align}
Besides, we assume that the UAV can independently control its horizontal and vertical flight speeds. Let $V^{\max}_{xy}$ and $V^{\max}_z$ denote the maximum allowable horizontal and vertical speeds, respectively. Accordingly, the UAV’s mobility is subject to the following constraints:
\begin{subequations}
\begin{numcases}{}
\|\mathbf{q}[t+1] - \mathbf{q}[t]\| \leq \alpha \cdot V_{xy}^{\max}, \quad \forall t,\\
|z[t+1] - z[t]| \leq \alpha \cdot V_{z}^{\max}, \quad \forall t.
\end{numcases}
\end{subequations}
% In addition, to facilitate battery recharging or the planning of subsequent tasks, the UAV is required to return to its initial location at the end of operations, i.e., time $T$:
% \begin{align}
%     \mathbf{q}[T]=\mathbf{q}[1]~z[T]=z[1].
% \end{align}

\subsection{User-UAV Communication Model}\label{subsec:communication}

We focus on modeling uplink communication from ground users to the UAV, as it accounts for the majority of transmission costs due to the large image payload.
Similarly, we denote the position of each ground user $n\in \mathcal{N}$ as $(\mathbf{w}_n, h_n)$, where $\mathbf{w}_n:=(x_n,y_n)$ represents the corresponding horizontal coordinate. Next, we present channel modeling between ground users and the UAV, and then we analyze the transmission latency. 

\textbf{Channel Modeling.} 
We consider a quasi-static air-to-ground (A2G) channel, where the small-scale fading remains constant within each slot and may vary across slots.
Firstly, the UAV-user $n$ distance and elevation angle at slot $t$ is defined as follows:
\begin{subequations}
\begin{numcases}{}
d_n[t] = \sqrt{\|\mathbf q[t]-\mathbf w_n\|^2 + (z[t]-h_n)^2},\\
\theta_n[t] = \arctan\!\Big(\frac{z[t]-h_n}{\|\mathbf q[t]-\mathbf w_n\|}\Big).
\end{numcases}
\end{subequations}
Specifically, A2G has an elevation-dependent Line-of-Sight (LoS) probability. We adopt a probabilistic LoS/NLoS model:
\begin{subequations}
\begin{numcases}{}
P_{\mathrm{LoS}}(\theta_n[t]) = \frac{1}{1 + a \exp\!\big(-b(\theta_n[t]-a)\big)},\\
P_{\mathrm{NLoS}}(\theta_n[t]) = 1 - P_{\mathrm{LoS}}(\theta_n[t]),
\end{numcases}
\end{subequations}
where $a,b>0$ are LoS-probability parameters.
The large-scale gains under LoS and NLoS are
$\beta^{\mathrm{LoS}}_n[t] = \frac{\beta_0}{d_n[t]^{\gamma_{\mathrm{LoS}}}}$ and $
\beta^{\mathrm{NLoS}}_n[t] = \frac{\beta_0}{d_n[t]^{\gamma_{\mathrm{NLoS}}}}$, with $\beta_0$ the reference gain at $d_0=1$ m and path-loss exponents $\gamma_{\mathrm{LoS}},\gamma_{\mathrm{NLoS}}$.
Then, we define the elevation-aware gain by averaging over the LoS state:
\begin{align}
\bar\beta_n[t] \!=\! P_{\mathrm{LoS}}(\theta_n[t])\,\beta^{\mathrm{LoS}}_n[t]
\!+\! \big(1-P_{\mathrm{LoS}}(\theta_n[t])\big)\,\beta^{\mathrm{NLoS}}_n[t].
\end{align}
The baseband-equivalent channel coefficient is modeled as:
\begin{align}
h_n[t] = \sqrt{\bar\beta_n[t]}\,\hat h_n[t],\label{equa:channel_gain}
\end{align}
where $\hat{h}_{n}[t]$ denotes small-scale fading, modeled as a complex-valued random variable with zero mean and unit variance.

\textbf{Uplink Transmission Time.} 
Recall that each query from ground user $n$ includes an image $\mathbf{I}_n$ with resolution $r_n$ and a text query $\mathbf{Q}_n$. The total data size for transmission is denoted by $D_n(r_n)$, where the image dominates the payload size and the query size is negligible~\cite{singh2019towards}. Hence, we can hold that:
\begin{align}
D_n(r_n) = \big( D_n^{\text{img}}(r_n) + D_n^{\text{txt}}\big) \cdot M_n \approx D_n^{\text{img}}(r_n)\cdot M_n.\label{equa:data_size}
\end{align}
where $D_n^{\text{img}}(r_n)$ and $D_n^{\text{txt}}$ represent the transmitted data size for the image and query, respectively, and $M_n$ denotes the number of queries from ground user $n$.

Given channel gain $h_n[t]$ in (\ref{equa:channel_gain}) and let $P_n$ denote the transmit power of user $n$. The signal-to-noise ratio (SNR) at time step $t$ is expressed as:
\begin{align}
\text{SNR}_n[t]=\frac{P_{n}|h_n[t]|^2}{\sigma^2_n}, \quad \forall n,~t,\label{equa:SNR}
\end{align}
where $\sigma^2_n$ represents the noise power. Hence, the achievable uplink transmission rate is given by:
\begin{align}
    R_n[t]=B_n\log_2(1+\text{SNR}_n[t]),\quad \forall n,~t,\label{equa:rate}
\end{align}
where $B_n$ denotes the available bandwidth to each user $n$. Notably, $r_n$ and $P_n$ in our uplink communication model remain constant over $t$. We adopt the following assumption for simplicity and practicality:

\begin{assumption}
For each user $n$, the image resolution $r_n$ and transmit power $P_n$ remain fixed during its upload window, as resolution is set before transmission and power control typically operates on coarser timescales~\cite{fodor2010near}. 
In our setting, both the uplink window and the slot length are short, making slot-varying power control costly.
Changing $r_n$ or $P_n$ mid-stream also incurs re-encoding/control overhead and breaks a stable accuracy–latency mapping, offering limited gain~\cite{yang2015optimum}.
We therefore optimize $(r_n,P_n)$ at the session level and let the trajectory policy handle slot-level fluctuations~\cite{ao2023energy}.
\label{assumption:1}
\end{assumption}

% \begin{assumption}[Session-level resolution and power]\label{assumption:1}
% For each user $n$, the image resolution $r_n$ and transmit power $P_n$ are kept fixed during its upload window. 
% This is a practical choice: the dominant A2G large-scale gain is elevation/geometry driven and changes slowly across slots; UE uplink power is hardware-limited and usually adjusted only with coarse TPC steps, while short-term fading is absorbed by AMC/HARQ. 
% Changing $r_n$ or $P_n$ mid-stream would introduce re-encoding/control overhead and break a stable accuracy–latency mapping. 
% We therefore set $(r_n,P_n)$ at the session level and let the trajectory handle slot-level variations.
% \end{assumption}

% \textbf{Effective gain and soft completion guard.}
% To state the one-frame completion target in a deterministic manner, we adopt a conservative elevation-aware effective gain $\bar\beta_n^{\mathrm{eff}}$ that absorbs slot-wise fading and mobility. A simple choice is
% \[
% \bar\beta_n^{\mathrm{eff}} := \eta\,\bar\beta_n(\bar{\mathbf q},\bar z),\qquad \eta\in(0,1),
% \]
% or a low-quantile of $\bar\beta_n[t]$ under the kinematic envelope. Using $\bar\beta_n^{\mathrm{eff}}$, the soft no-outage guard is
% \begin{align}
% \label{eq:soft_guard_sys}
% D_n(r_n) \ \le\ \alpha T\, B_n \log_2\!\Big(1+\tfrac{P_n\,\bar\beta_n^{\mathrm{eff}}}{\sigma_n^2}\Big) + \xi_n,\qquad \xi_n\ge 0,
% \end{align}
% which will be enforced as a feasibility constraint in Sec.~IV. In our settings the frame length $T$ is large, and the slack $\xi_n$ is typically zero.

We now analyze the uplink transmission time. Since the uplink rate $R_n[t]$ in~\eqref{equa:rate} can vary across slots, the simple form $D_n(r_n)/R_n$ does not generally apply.
With slot length $\alpha$, user $n$ can transmit at most $\alpha \cdot R_n[t]$ bits of data in slot $t$.
Given the payload size $D_n(r_n)$, we define the \emph{uplink completion index} (the smallest slot index by which user $n$ finishes uploading):
\begin{align}
T_n^{\text{cmp}}
= \min\Big\{ t\in\mathcal{T} \mid\sum_{\tau=1}^{t} \alpha\,R_n[\tau] \;\ge\; D_n(r_n) \Big\}.
\end{align}
The integer-slot latency is $\alpha \cdot T_n^{\text{cmp}}$. If the final slot is only partially used, the overall uplink time $T_n^{\text{up}}$ is computed as:
\begin{align}
T_n^{\text{up}}
= \alpha\big(T_n^{\text{cmp}}-1\big)
+ \frac{ D_n(r_n) - \sum_{\tau=1}^{T_n^{\text{cmp}}-1} \alpha\,R_n[\tau] }{ R_n[T_n^{\text{cmp}}] }, 
\end{align}
where $T_n^{\text{up}}$ is a continuous-time and does not need to align with slot boundaries.

\subsection{VLM Inference Model}\label{subsec:inference}

To support inference services for ground users, the VLM deployed on the UAV processes user inputs. First, we present an overview of the unified inference pipeline commonly adopted by modern VLMs~\cite{liu2023visual,li2023blip}. 

\begin{enumerate}
    \item \textbf{Visual Encoding:} 
    Given an image $\mathbf{I}_n(r_n)\!\in\!\mathbb{R}^{H_n\times W_n\times C}$\footnote{
    Here $C$ is the channel count (e.g., $C{=}3$ for RGB), and resolution $r_n=H_n\cdot W_n$. Most VLMs use a fixed input aspect ratio (e.g., $H_n{:}W_n=1{:}1$ in LLaVA); thus, given $r_n$, the pair $(H_n,W_n)$ is uniquely determined.}, a pretrained visual encoder extracts visual embeddings:
    \begin{align}
        \mathbf{Z}_n(r_n) = g\big(\tilde{\mathbf{I}}_n(r_n)\big),
    \end{align}
    where $\mathbf{Z}_n(r_n) \in \mathbb{R}^{L(r_n) \times d}$ is the obtained visual tokens; $L(r_n)$ is the number of tokens and $d$ is the embedding size. In common image encoders, $L(r_n)$ grows with $r_n$ for preserving richer image information~\cite{liu2023visual,luo2024feast}.
    \item \textbf{Modality Projection:}  
    The visual tokens are then mapped into the LLM input space via a learnable projector:
    \begin{align}
        \mathbf{E}_n(r_n) = \mathbf{W} \cdot \mathbf{Z}_n(r_n),
    \end{align}
    where $\mathbf{E}_n(r_n) \in \mathbb{R}^{L \times d'}$ and $d'$ matches the LLM’s input embedding size.
    % where $d'$ matches the embedding dimension of the LLM’s input layer.
    \item \textbf{Prompt Construction:}  
    Next, the projected tokens are concatenated with the question tokens $\mathbf{Q}_n = \{w_1, \dots, w_T\}$ to form a multimodal prompt:
    \begin{align}
        \mathcal{X}_n(r_n) = [\mathbf{E}_n(r_n);\, \mathbf{Q}_n;\, \texttt{<eos>}],
    \end{align}
    where \texttt{<eos>} indicates the end of the question prompt.
    \item \textbf{Answer Generation:}  
    The prompt is finally passed into a decoder-only language model parameterized by $\theta$ to generate the predicted answer:
    \begin{align}
        \mathbf{A}_n^{\text{pred}}(r_n) = \mathrm{LLM}(\mathcal{X}_n(r_n); \theta),
    \end{align}
    where $\mathbf{A}_n^{\text{pred}}(r_n)$ denotes the generated answer sequence comprising $S$ tokens from the model vocabulary.
\end{enumerate}

\textbf{Inference Accuracy.}  
We consider a standard \textit{top-1 accuracy} metric from the TextVQA benchmark~\cite{singh2019towards} to evaluate the VLM inference performance:
\begin{align}
A_n(r_n)
= \mathbb{E}\biggl[\min\!\big\{\frac{1}{3}\sum_{k=1}^{K}\mathbb{I}\Big(\mathbf{A}_n^{\text{pred}}(r_n)=a_n^{\text{gt}(k)}\Big),\ 1\big\} \biggr],
\end{align}
where $\{a_n^{\text{gt}(k)}\}_{k=1}^{K}$ are the human-annotated ground truths and $\mathbf{A}_n^{\text{pred}}(r_n)$ is the model output under $r_n$. Larger $r_n$ often tends to preserve more fine-grained details of the image.
Empirically, $A_n(r_n)$ increases with resolution $r_n$, but exhibits diminishing returns, with smaller gains at high $r_n$. Such observations are also reported in~\cite{guo2024llava,luo2024feast}.

\textbf{Inference Latency.}
The selection of image resolutions also affects inference speed~\cite{liu2024improved}. Since the generated answer length is roughly resolution-invariant (e.g., short phrases in TextVQA), we define the processing time $T_n^{\text{proc}}(r_n)$ as:
\begin{align}
T_n^{\text{proc}}(r_n) = \frac{\mathbb{E}[|\mathbf{A}_n^{\text{pred}}|]}{v(r_n)},
\end{align}
where $\mathbb{E}[|\mathbf{A}_n^{\text{pred}}|]$ denotes the expected number of output tokens generated by the VLM, and $v(r_n)$ is the resolution-dependent inference speed (in tokens/s) measured empirically. The specific form of $v(r_n)$ is given in Section~\ref{subsec:accuracy}.

% \textbf{Inference Latency.} 
% Increasing the image resolution would also slow the VLM model inference speed. To model the processing latency at the UAV, we assume that for a given image-question pair, the length of the generated answer remains consistent across different input resolutions. This is justified by the nature of tasks like TextVQA, where the output typically consists of short phrases (e.g., “two cars” or “\$4.99”) whose length does not vary with resolution. As such, the processing time $T_n^{\text{proc}}(r_n)$ can be expressed as:
% \begin{align}
% T_n^{\text{proc}}(r_n) = \frac{L'}{v(r_n)},
% \end{align}
% where $L'$ is the fixed number of output tokens and $v(r_n)$ denotes the inference speed (in tokens/s) corresponding to resolution $r_n$ as reported in Table~\ref{table:LLaVa_performance}.

\textbf{Downlink Time.}
After inference, the generated answer $\mathbf{A}_n^{\text{pred}}(r_n)$ is sent back to user $n$. 
Since $\mathbf{A}_n^{\text{pred}}(r_n)$ is typically a short text (e.g., less than 20 tokens in TextVQA~\cite{singh2019towards}), we treat downlink latency as a fixed constant $T_n^{\text{down}}$.

% Given the UAV’s stronger transmission capability, we hence exclude downlink latency from optimization, but denote it by a constant $T_n^{\text{down}}$ for completeness.

\subsection{Problem Formulation}\label{subsec:problem}

Our first objective is to minimize the maximum latency for all ground users completing their tasks. Hence, we first express the total time consumption for each user $n$ as:
\begin{align}
T^{\text{total}}_n=T_n^{\text{up}}+T_n^{\text{proc}}(r_n)+T_n^{\text{down}},\quad \forall n,
\end{align}
and the first part of the objective function is expressed as:
\begin{align}
    \min \Big\{\max_{n \in \mathcal{N}}~T^{\text{total}}_n\Big\}.
\end{align}
To balance latency and power consumption, we add a reward term that encourages lower transmit power:
\begin{align}
    \max \Big\{-\sum_{n\in\mathcal{N}} P_n\Big\}.
\end{align}
To ensure inference performance, we further impose that the expected accuracy $A_n(r_n)$ meets each user’s minimum requirement $A_n^{\text{min}}$. We then formulate a joint optimization problem over the UAV trajectory $(\mathbf{q}[t],z[t])$, user transmit power $P_n$, and image resolution $r_n$, subject to constraints on UAV mobility, power, and resolution selection:
\begin{subequations}\label{Original_Problem}
\begin{align}
\mathbb{P}_0:&\min_{\substack{\{\mathbf{q}[t],z[t],\mathbf{P},\mathbf{r}\}}} \left\{\max_{n \in \mathcal{N}}~T^{\text{total}}_n+\zeta\sum_{n \in \mathcal{N}}P_n\right\}\tag{\ref{Original_Problem}} \\   
\text{s.t.}~
&\text{C1 :~}A_n(r_n) \geq A_n^{\min}, \quad \forall n,\label{constraint:accuracy}\\
&\text{C2 :~}h^{\min} \leq z[t] \leq h^{\max}, \quad \forall t,\label{constraint:altitude}\\
&\text{C3 :~}\|\mathbf{q}[t+1] - \mathbf{q}[t]\| \leq \alpha \cdot V_{xy}^{\max}, \quad \forall t,\label{constraint:speed_xy}\\
&\text{C4 :~}|z[t+1] - z[t]| \leq \alpha \cdot V_{z}^{\max}, \quad \forall t,\label{constraint:speed_z}\\
&\text{C5 :~} P_n \leq P_n^{\max}, \quad \forall n,\label{constraint:power}\\
&\text{C6 :~}r_n \in \mathcal{R}^{\mathrm{res}}= \{r^{(1)},\ldots, r^{(J)}\}, \quad \forall n.\label{constraint:resolution}
% &
% {\color{blue}
% \text{C7 :~}D_n(r_n) \le \alpha T B_n \log_2\!\Big(1+\tfrac{P_n |\bar{h}_n|^2}{\sigma_n^2}\Big), \quad \forall n.\label{constraint:soft_guard}
% }
\end{align}
\end{subequations}
Here, $\zeta \geq 0$ is a tunable coefficient that controls the relative importance of power consumption versus latency minimization.
The constraint~\eqref{constraint:accuracy}) guarantees that the aerial inference accuracy remains above a required threshold $A_n^{\text{min}}$. Constraints (\ref{constraint:speed_xy}) and (\ref{constraint:speed_z}) limit the UAV's displacement between consecutive time slots.
Constraint~\eqref{constraint:power} restricts each user's transmit power within feasible bounds. 
Finally, constraint~\eqref{constraint:resolution} restricts the resolution selection to a finite candidate set $\mathcal{R}^{\text{res}} = \{r^{(1)},\ldots, r^{(J)}\}$ containing $J$ supported resolution choices.
% {\color{blue}
% Constraint~\eqref{constraint:soft_guard} guarantees transmission feasibility within the $T$-slot window under a conservative channel surrogate $\bar{h}_n$. 
% }

Solving problem $\mathbb{P}_0$ is challenging for three reasons. First, it is a mixed-integer non-linear program (MINLP) involving coupled discrete (e.g., resolution) and continuous (e.g., trajectory) variables, making it \textit{NP-hard}. Second, it is non-convex due to the dependence of SNR on $(\mathbf{q}[t],z[t])$ and $P_n$, with resolution $r_n$ further complicating uplink latency. Third, its time-sequential nature enlarges the solution space with $T$, causing high computational burden and making convex or heuristic methods infeasible.

\begin{figure*}[t]
    \centering
    \includegraphics[width =.9\textwidth]{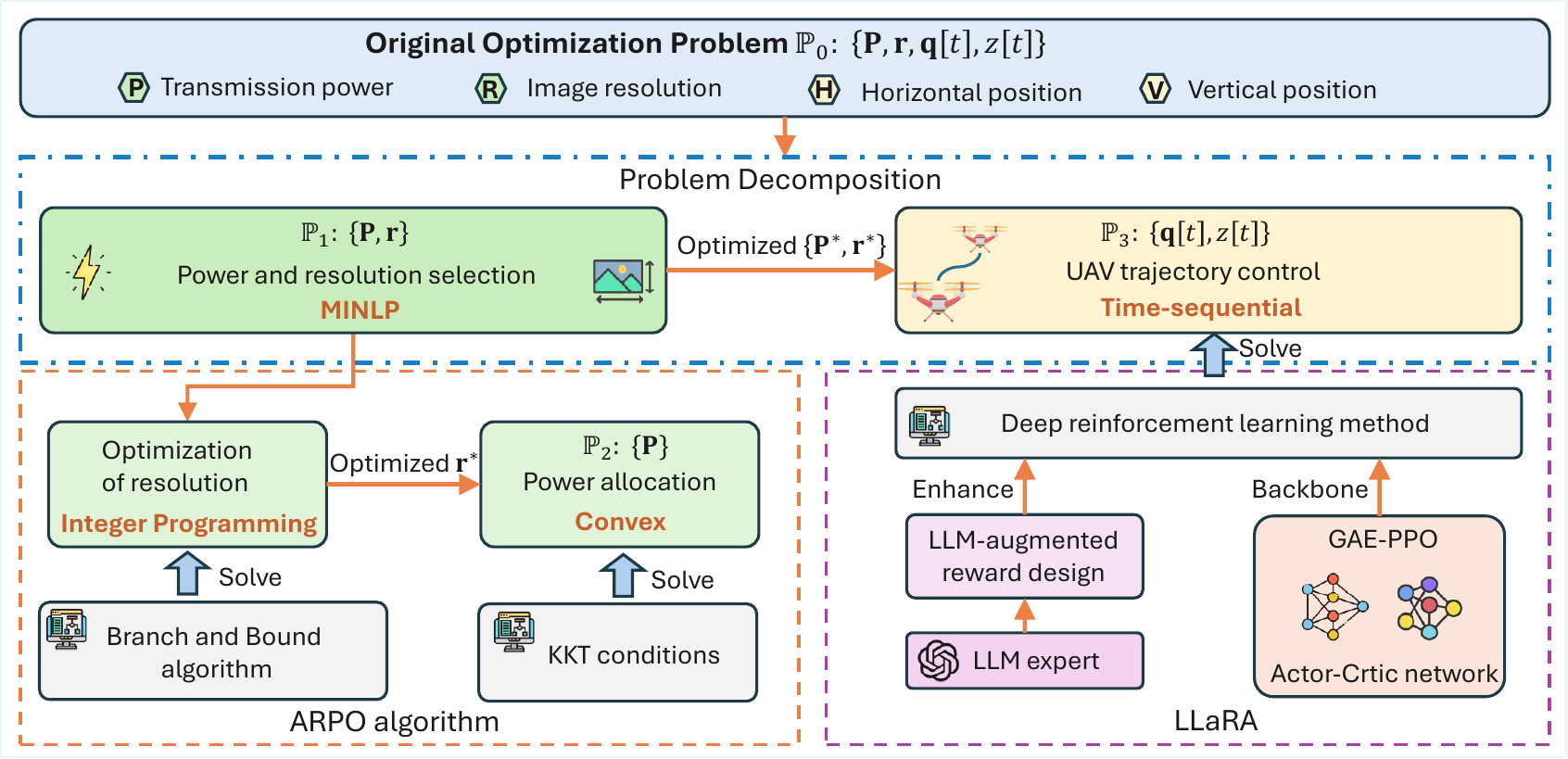}
    \caption{The framework of our proposed hierarchical ARPO-LLaRA optimization framework. At the start of uplink session, ARPO determine image resolutions $\mathbf{r}$ and powers $\mathbf{P}$ for transmission using the B\&B algorithm and KKT conditions, respectively. Then, LLaRA uses an LLM-assisted DRL method for planning the slot-level UAV trajectory.}
    \label{fig:optimization}
\end{figure*}

% \textbf{Rationale for Hierarchical Decomposition.}
To tackle with the challenges of $\mathbb{P}_0$, we adopt a two-step hierarchical framework. First, we optimize the transmit power and image resolution via the ARPO algorithm. Second, we optimize the UAV trajectory using the LLaRA. This decomposition reduces overall complexity and allows each subproblem to be solved with the most suitable method. Fig.~\ref{fig:optimization} illustrates the overall ARPO-LLaRA optimization framework.

\section{The proposed ARPO Algorithm for resolution and power control}\label{sec:ARPO}

This section introduces the ARPO algorithm, which jointly optimizes image resolutions $\mathbf{r}:=\{r_n\}_{n\in\mathcal{N}}$ and transmit powers $\mathbf{P}:=\{P_n\}_{n\in\mathcal{N}}$. 
Specifically, given the initial UAV position $\{\bar{\mathbf{q}}, \bar{z}\}$ at the start of the uplink session, ARPO chooses \emph{slot-invariant} $(\mathbf{r},\mathbf{P})$. This aligns with our Assumption~\ref{assumption:1} that fixes image resolution and power per query/user within an uplink session, and cleanly separates the slower per-session controls $(\mathbf{r},\mathbf{P})$ from the slot-level trajectory evolution. 
% {\color{blue}
% In addition, we set the channel surrogate $\bar\beta_n^{\mathrm{eff}}$ as $\eta\,\bar\beta_n(\bar{\mathbf q},\bar z)$ with $\eta\in(0,1)$
% }

Based on the above statements, we now simplify the original problem $\mathbb{P}_0$ to facilitate optimization. When choosing $(\mathbf{r},\mathbf{P})$, we evaluate the channel at the current pose $\{\bar{\mathbf{q}}, \bar{z}\}$ and treat it as constant over the decision interval. Under this quasi-static approximation, the per-slot rate becomes $R_n[t]\equiv R_n$, and the uplink time simplifies to the closed form:
\begin{align}
T_n^{\text{up}}(r_n,P_n)
= \frac{D_n(r_n)}{B_n\log_2\!\big(1+\tfrac{P_n\,|h_n|^2}{\sigma_n^{2}}\big)},
\label{eq:arpo_up}
\end{align}
where $D_n(r_n)$ is the payload at resolution $r_n$ and $B_n$ is the allocated bandwidth fixed by the system operator. After that, the total latency for user $n$ can be rewritten as:
\begin{align}
    T^{\text{total}}_n(r_n,P_n) = 
    T_n^{\text{up}}(r_n,P_n)
    + T_n^{\text{proc}}(r_n) 
    + T_n^{\text{down}},\label{equa:T_r_P}
\end{align}
Following~\cite{li2025resource}, we introduce an auxiliary variable $\tau$ to replace the term $\max_{n \in \mathcal{N}}~T^{\text{total}}_n$ in \eqref{Original_Problem} with a new constraint \eqref{constraint:T}. 
Problem $\mathbb{P}_0$ is transformed into an equivalent form $\mathbb{P}_1$:
\begin{subequations}\label{Power_Resolution_Optimization}
\begin{align}
\mathbb{P}_1:\quad 
\min_{\mathbf{r},\mathbf{p},\tau}~&\Big\{ \tau + \zeta \sum_{n \in \mathcal{N}} P_n \Big\}\tag{\ref{Power_Resolution_Optimization}} \\ 
\mathrm{s.t.} \quad 
& \text{(\ref{constraint:accuracy})},~\text{(\ref{constraint:power})},~\text{(\ref{constraint:resolution})},~
% {\color{blue}
% \text{(\ref{constraint:soft_guard})}
% }
\nonumber\\
& T^{\text{total}}_n(r_n,P_n) \le \tau, \quad \forall n.\label{constraint:T}
\end{align}
\end{subequations}
However, problem $\mathbb{P}_1$ is still an MINLP that cannot be solved directly, and $T^{\text{total}}_n(r_n,P_n)$ in \eqref{constraint:T} involves a non-convex term.

To facilitate the solution, we decouple variables $\{\mathbf{r},\mathbf{P}\}$ and optimize them in an alternating manner. First, recalling that the inference speed function $v(r_n)$ is non-decreasing with resolution $r_n$, we can derive that the objective function \eqref{Power_Resolution_Optimization} is also non-decreasing with $r_n$. Hence, to reduce overall latency, we choose the smallest resolutions that satisfy the accuracy constraint \eqref{constraint:accuracy}. 
In practice, VLMs usually support only a small finite set of resolutions (e.g., $4$-$5$ choices such as $\{336^2,448^2,1024^2,1536^2\}$ for LLaVA-HR~\cite{luo2024feast}).
Hence, the optimization on $\mathbf{r}$ is lightweight and can be implemented via a Branch and Bound (B\&B) algorithm~\cite{zoppei2022branch} effectively.

Next, we can substitute the obtained $\mathbf{r}^*$ from the B\&B algorithm into problem $\mathbb{P}_1$ and reformulate it as follows:
\begin{subequations}\label{Problem1_1}
\begin{align}
\mathbb{P}_2:\quad 
\min_{\mathbf{p},\tau} ~ &\Big\{ \tau + \zeta \sum_{n \in \mathcal{N}} P_n \Big\}\tag{\ref{Problem1_1}} \\ 
\mathrm{s.t.} \quad 
& \text{(\ref{constraint:power})},
% {\color{blue}
% \text{(\ref{constraint:soft_guard})}
% }
\nonumber\\
& T^{\text{total}}_n(r_n^*,P_n) \le \tau, \quad \forall n.\label{constraint:T2}
\end{align}
\end{subequations}
With both the objective and constraints of $\mathbb{P}_2$ being convex, it now becomes a convex problem. Consequently, KKT conditions can be applied directly to derive the optimal solution. With $\bm{\iota} := [\iota_n|_{n \in \mathcal{N}}]$ and $\bm{\omega} := [\omega_n|_{n \in \mathcal{N}}]$ denoting the Lagrange multipliers, the corresponding Lagrangian function is given by:
\begin{align}
&\mathcal{L}_1(\bm{P},\tau,\bm{\lambda},\mathbf{\iota},\mathbf{\omega}) = \tau + \zeta \sum_{n \in \mathcal{N}} P_n  + \sum_{n \in \mathcal{N}}  \iota_n \cdot (P_n-P_n^{\max}) \nonumber\\
&+ \sum_{n \in \mathcal{N}} \omega_n \cdot (T^{\text{total}}_n(r_n^*,P_n) - \tau).
    % [(\frac{R_l c_n D_n}{f_n}+\frac{R_l \rho_n d_{1n}+d_{2n}}{r_n})-\mathcal{T}]
\end{align}
The KKT conditions are as follows:
\begin{subequations}
\begin{numcases}{}
    \frac{\partial \mathcal{L}_1}{\partial P_n} = \zeta + \iota_n 
    -\omega_n g_n(P_n,r_n^*) = 0, \quad \forall n, \label{Lagrange:partial_P} \\
    \frac{\partial \mathcal{L}_1}{\partial \tau} = 1-\sum_{n \in \mathcal{N}} \omega_n = 0, \label{Lagrange:partial_tau}\\
    \iota_n \cdot (P_n-P_n^{\max})=0, \quad \forall n, \label{Lagrange:iota} \\
    \omega_n \cdot (T^{\text{total}}_n(r_n^*,P_n) - \tau) =0, \quad \forall n, \label{Lagrange:omega}
\end{numcases}
\end{subequations}
where $\mathcal{L}_1(\bm{P},\tau,\bm{\iota},\bm{\omega})$ is abbreviated as $\mathcal{L}_1$, and $g_n(P_n,r_n^*)$ denotes the derivative of $T^{\text{total}}_n(r_n^*,P_n)$ with respect to $P_n$:
\begin{align}
    g_n(P_n,r_n^*) = 
\frac{D(r_n^*) h_n}{B_n \sigma^2 (1 + \frac{h_n P_n}{\sigma^2}) 
\ln 2  [\log_2(1 + \frac{h_n P_n}{\sigma^2})]^2}.
\end{align}
With conditions \eqref{Lagrange:partial_P}-\eqref{Lagrange:omega}, \textbf{Proposition \ref{proposition:KKT}} is given to find the optimal solution to problem $\mathbb{P}_2$.

% With $\mathbf{r}^*$ fixed, $\mathbb{P}_2$ is convex. Rather than invoking a generic solver, we use KKT to obtain a closed-form, water-filling–like structure: for any $\tau$,
% \[
% P_n(\tau)=\frac{\sigma^2}{h_n}\!\left(2^{\frac{D(r_n^*)}{B_n(\tau-\Gamma_n(r_n^*))}}-1\right),\qquad 
% P_n^*=\min\{P_n(\tau^*),\,P_n^{\max}\},
% \]
% where $\Gamma_n(r_n^*)=T_n^{\mathrm{proc}}(r_n^*)+T_n^{\mathrm{down}}$. The coupling across users enters only through the KKT stationarity and $\sum_n\omega_n=1$, which yields a \emph{single} scalar equation
% \[
% \sum_{n\in\mathcal N}\frac{\zeta}{g_n(P_n(\tau),r_n^*)}=1,
% \]
% whose unique root $\tau^*$ is found by a 1-D bisection (the left-hand side is strictly monotone). 
% \textit{Thus KKT removes the need for a multi-dimensional search: we evaluate $P_n(\tau)$ in closed form and only line-search over $\tau$.}%

\begin{proposition}
The optimal solution $\mathbf{P}^*=\{P_n^*\}_{n\in\mathcal{N}}$ and $\tau^*$ to problem $\mathbb{P}_2$ are expressed as:
\begin{align}
    &P_n^* =
\begin{cases}
    P_n^{\max}, & \text{if } \zeta=0, \\
    \min\{P_n(\tau^*),\, P_n^{\max}\}, & \text{if } \zeta>0,
\end{cases}
\label{equa:optimal_p}\\
    &\tau^* \text{ is the solution to: } \sum_{n \in \mathcal{N}} \frac{\zeta}{g_n(P_n(\tau),r_n^*)} =1,
\end{align}
where
\begin{subequations}
\begin{numcases}{}
    P_n(\tau) = \frac{\sigma^2}{h_n} \big( 2^{\frac{D(r_n^*)}{B_n (\tau - \Gamma_n(r_n^*) )}} - 1 \big),\\
    \Gamma_n(r_n^*) = T_n^{\text{proc}}(r_n^*) + T_n^{\text{down}}.
\end{numcases}
\end{subequations}
Specifically, the solution $\tau^*$ can be efficiently obtained using a 1-D bisection search, and the corresponding $\hat{P}_n(\hat{\tau})$ can be computed via \eqref{equa:optimal_p}. \textit{Thus, KKT removes the need for a multi-dimensional search, i.e., we evaluate $P_n(\tau)$ in closed form and only line-search over $\tau$.}
\label{proposition:KKT}
\end{proposition}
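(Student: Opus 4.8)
The plan is to lean on the convexity of $\mathbb{P}_2$ noted just above the proposition: the objective is linear, and since $T_n^{\text{up}}(r_n^*,\cdot)$ is convex and strictly decreasing in $P_n$, each latency constraint \eqref{constraint:T2} carves out a convex set in $(\mathbf{P},\tau)$; together with the box constraints \eqref{constraint:power} the feasible region is convex. Provided Slater's condition holds (a strictly feasible point exists, e.g. $P_n$ slightly below $P_n^{\max}$ with $\tau$ large), the KKT system \eqref{Lagrange:partial_P}--\eqref{Lagrange:omega} is both necessary and sufficient for global optimality. It therefore suffices to exhibit a primal--dual point satisfying these conditions and read off the claimed closed form.

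First I would dispatch the degenerate case $\zeta=0$. Here the objective collapses to $\min\tau=\min\max_n T_n^{\text{total}}$, and because each $T_n^{\text{total}}$ is strictly decreasing in $P_n$ with no penalty on power, driving every user to $P_n^{\max}$ minimises all latencies simultaneously and hence $\tau$, yielding $P_n^*=P_n^{\max}$. This is consistent with KKT: setting $\zeta=0$ in \eqref{Lagrange:partial_P} gives $\iota_n=\omega_n g_n\ge 0$, so the sign conditions permit the power caps to be active.

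The substantive case is $\zeta>0$, which I would handle by inverting the latency constraint and then performing a KKT case split per user. Solving $T_n^{\text{up}}(r_n^*,P_n)+\Gamma_n(r_n^*)=\tau$ for $P_n$ yields exactly $P_n(\tau)=\tfrac{\sigma^2}{h_n}\big(2^{D(r_n^*)/(B_n(\tau-\Gamma_n(r_n^*)))}-1\big)$, so the constraint is equivalent to $P_n\ge P_n(\tau)$. From \eqref{Lagrange:partial_P} we have $\omega_n g_n(P_n,r_n^*)=\zeta+\iota_n>0$, forcing $\omega_n>0$; complementary slackness \eqref{Lagrange:omega} then makes the latency constraint active, $T_n^{\text{total}}=\tau^*$, so $P_n^*=P_n(\tau^*)$ when the power cap is slack ($\iota_n=0$) and $P_n^*=P_n^{\max}$ when it binds ($\iota_n>0$). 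Combining the branches gives $P_n^*=\min\{P_n(\tau^*),P_n^{\max}\}$ as in \eqref{equa:optimal_p}. To pin down $\tau^*$, I would differentiate the identity $T_n^{\text{total}}(r_n^*,P_n(\tau))=\tau$ to obtain $P_n'(\tau)=-1/g_n(P_n(\tau),r_n^*)$, substitute $\omega_n=\zeta/g_n(P_n(\tau^*),r_n^*)$ into the $\tau$-stationarity $\sum_n\omega_n=1$ of \eqref{Lagrange:partial_tau}, and arrive at the scalar equation $\sum_n \zeta/g_n(P_n(\tau),r_n^*)=1$.

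Finally I would justify the one-dimensional bisection. A short monotonicity check shows $g_n(\cdot,r_n^*)$ is strictly decreasing in its power argument while $P_n(\tau)$ is strictly decreasing in $\tau$, so $g_n(P_n(\tau),r_n^*)$ is increasing and $\tau\mapsto\sum_n \zeta/g_n(P_n(\tau),r_n^*)$ is strictly decreasing, running from $+\infty$ as $\tau\downarrow\max_n\Gamma_n(r_n^*)$ to $0$ as $\tau\to\infty$; the intermediate value theorem then gives a unique root $\tau^*$, after which each $P_n^*$ follows in closed form via \eqref{equa:optimal_p}. The main obstacle I anticipate is the boundary bookkeeping: the clean scalar equation presupposes that every power cap is slack ($\iota_n=0$), so care is needed either to argue the optimum lies in this interior regime — feasibility already forces $P_n(\tau^*)\le P_n^{\max}$ — or, when a cap binds, to amend the stationarity sum with the extra multipliers $\iota_n$. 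Reconciling this coupling between the $\min\{\cdot,\cdot\}$ capping in \eqref{equa:optimal_p} and the single scalar equation, rather than any routine derivative computation, is where the argument is most delicate.
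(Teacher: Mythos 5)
Your proposal follows essentially the same route as the paper's proof: the case split on $\zeta=0$ versus $\zeta>0$, deducing $\omega_n>0$ from the stationarity condition \eqref{Lagrange:partial_P}, invoking complementary slackness \eqref{Lagrange:omega} to make the latency constraint active and inverting it to obtain $P_n(\tau)$ in closed form, substituting $\omega_n(\tau)=\zeta/g_n(P_n(\tau),r_n^*)$ into \eqref{Lagrange:partial_tau} to get the scalar equation $\sum_{n\in\mathcal{N}}\zeta/g_n(P_n(\tau),r_n^*)=1$, and solving it by one-dimensional search via monotonicity. Your additions — Slater's condition to justify KKT sufficiency, the explicit monotonicity/intermediate-value argument that the paper merely asserts, and the honest flag that the scalar equation presupposes slack power caps ($\iota_n=0$) while the final $\min\{P_n(\tau^*),P_n^{\max}\}$ capping is asserted without reconciling that regime — only make explicit what the paper's terser proof glosses over, so the proposal is correct and, if anything, more careful than the original.
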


\begin{proof} 
Recall that $\zeta$ in \eqref{Lagrange:partial_P} is a pre-determined coefficient for latency-power trade-off. If $\zeta = 0$, the objective ignores the power cost. According to the condition~\eqref{Lagrange:partial_P}, the optimal power is achieved at the upper bound $P_n^*=P_n^{\max}$.
Otherwise, from condition~\eqref{Lagrange:partial_P}, we obtain:
\begin{align}
    \zeta + \iota_n - \omega_n g_n(P_n, r_n^*) = 0, \quad \forall n,
\end{align}
where only $\omega_n >0$ makes the equation hold. Substitute $\omega_n>0$ into \eqref{Lagrange:omega}, we can obtain: $T_n^{\text{total}}(P_n, r_n^*) = \tau$, which yields an implicit relation between $P_n$ and $\tau$. Solving for $P_n$ gives:
\begin{align}
    P_n(\tau) = \frac{\sigma^2}{h_n} \left(2^{\frac{D(r_n^*)}{B_n(\tau - \Gamma_n(r_n^*))}} - 1\right),
\end{align}
where $\Gamma_n(r_n^*) = T_n^{\mathrm{proc}}(r_n^*) + T_n^{\mathrm{down}}$. Substituting $P_n(\tau)$ into the stationarity condition with $\iota_n = 0$ leads to:
\begin{align}
    \omega_n(\tau) = \frac{\zeta}{g_n(P_n(\tau), r_n^*)}.
\end{align}
Then, from \eqref{Lagrange:partial_tau}, we obtain the condition:
\begin{align}
    \sum_{n \in \mathcal{N}} \omega_n(\tau) = 1,
\end{align}
which can be efficiently solved for $\tau^*$ due to the monotonicity of the left-hand side. The corresponding power allocation is then given by $P_n^* = \min\{P_n(\tau^*),\, P_n^{\max}\}$ to satisfy the constraint $P_n \leq P_n^{\max}$, and the corresponding dual variables can be obtained from the KKT conditions.
\end{proof}

% \begin{proof}
% Since problem $\mathbb{P}_2$ satisfies Slater's condition, the KKT conditions provide necessary and sufficient conditions for optimality. The Lagrangian is constructed in \eqref{Lagrange:partial_P}, and the KKT conditions include stationarity, complementary slackness, and primal and dual feasibility.

% From the stationarity condition \eqref{Lagrange:partial_P}, we obtain:
% \[
% \zeta + \iota_n - \omega_n g_n(P_n, r_n^*) = 0, \quad \forall n.
% \]
% When $\omega_n > 0$, the complementary slackness condition \eqref{Lagrange:omega} requires $T_n^{\text{total}}(P_n, r_n^*) = \tau$, which yields an implicit relation between $P_n$ and $\tau$. Solving for $P_n$ gives:
% \[
% P_n(\tau) = \frac{\sigma^2}{h_n} \left(2^{\frac{D(r_n^*)}{B_n(\tau - \Gamma_n(r_n^*))}} - 1\right),
% \]
% where $\Gamma_n(r_n^*) = T_n^{\mathrm{proc}}(r_n^*) + T_n^{\mathrm{down}}$. Substituting $P_n(\tau)$ into the stationarity condition with $\iota_n = 0$ leads to:
% \[
% \omega_n(\tau) = \frac{\zeta}{g_n(P_n(\tau), r_n^*)}.
% \]
% Then, from \eqref{Lagrange:partial_tau}, we obtain the condition:
% \[
% \sum_{n \in \mathcal{N}} \omega_n(\tau) = 1,
% \]
% which can be efficiently solved for $\tau^*$ due to the monotonicity of the left-hand side. The corresponding power allocation is then given by $P_n^* = \min\{P_n(\tau^*),\, P_n^{\max}\}$ to satisfy the constraint $P_n \leq P_n^{\max}$, and the corresponding dual variables can be recovered from the KKT conditions.

% Thus, the solution given in the theorem satisfies all KKT conditions and is optimal.
% \end{proof}

\begin{algorithm}
\caption{ARPO algorithm for problem $\mathbb{P}_1$} 
\label{Algorithm:ARPO}
% \footnotesize
\KwIn{
MINLP problem $\mathbb{P}_1$;
}
\KwOut{Optimal solution $\{\mathbf{r}^*,\mathbf{P}^*,\tau^*\}$;}

Search the minimum optimal $\mathbf{r}^*=\{r_n^*\}_{n \in \mathcal{N}}$ for each user $n$ using the B\&B algorithm, s.t., $A(r_n^*)\geq A_n^{\min}$;

Given the obtained $\mathbf{r}^*$, transform $\mathbb{P}_1$ into a convex problem $\mathbb{P}_2$;

Apply KKT conditions on $\mathbb{P}_2$ to obtain \eqref{Lagrange:partial_P}-\eqref{Lagrange:omega};

Derive the optimal solution $\{\mathbf{P}^*,\tau^*\}$ to $\mathbb{P}_2$ according to \textbf{Proposition \ref{proposition:KKT}} when coefficient $\zeta$ is given;

\Return{$\{\mathbf{r}^*,\mathbf{P}^*,\tau^*\}$}
\end{algorithm}

\section{The Proposed LLaRA Method for trajectory optimization}\label{sec:LLaRA}

This section details our proposed LLaRA method for optimizing the UAV trajectory. With the obtained $\{\mathbf{r}^*,\mathbf{p}^*\}$ from the ARPO algorithm, now we focus on the optimization of the UAV trajectory $\{(x[t],y[t],z[t])\}_{t \in \mathcal{T}}$. Accordingly, the original problem $\mathbb{P}_0$ can be simplified as follows:
\begin{subequations}\label{Trajectory_Optimization}
\begin{align}
\mathbb{P}_3:
\min_{\{x[t],y[t],z[t]\}}~&\left\{\max_{n \in \mathcal{N}}~
    T_n^{\text{total}}\big(r_n^*,P_n^*\big)\right\}
    \tag{\ref{Trajectory_Optimization}} \\ 
\mathrm{s.t.} \quad 
& \text{(\ref{constraint:altitude})},~\text{(\ref{constraint:speed_xy})},~\text{(\ref{constraint:speed_z})}. \nonumber
\end{align}
\end{subequations}
To solve problem $\mathbb{P}_3$, we propose LLaRA, which employs the LLM as a reward designer for enhancing the traditional RL ability. We present the design of LLaRA in the following subsections.
% Specifically, Fig.~\ref{fig:LLM-PPO} provides the overall workflow of our proposed LLaRA method.

% In the following, we first introduce the Markov decision process (MDP) design regarding $\mathbb{P}_3$ in Section~\ref{subsec:MDP}, and then describe the backbone PPO algorithm and the LLM‑based reward design in Sections~\ref{subsec:PPO} and \ref{subsec:LLM_reward}, respectively.

\begin{figure*}
    \centering
    \includegraphics[width =.9\textwidth]{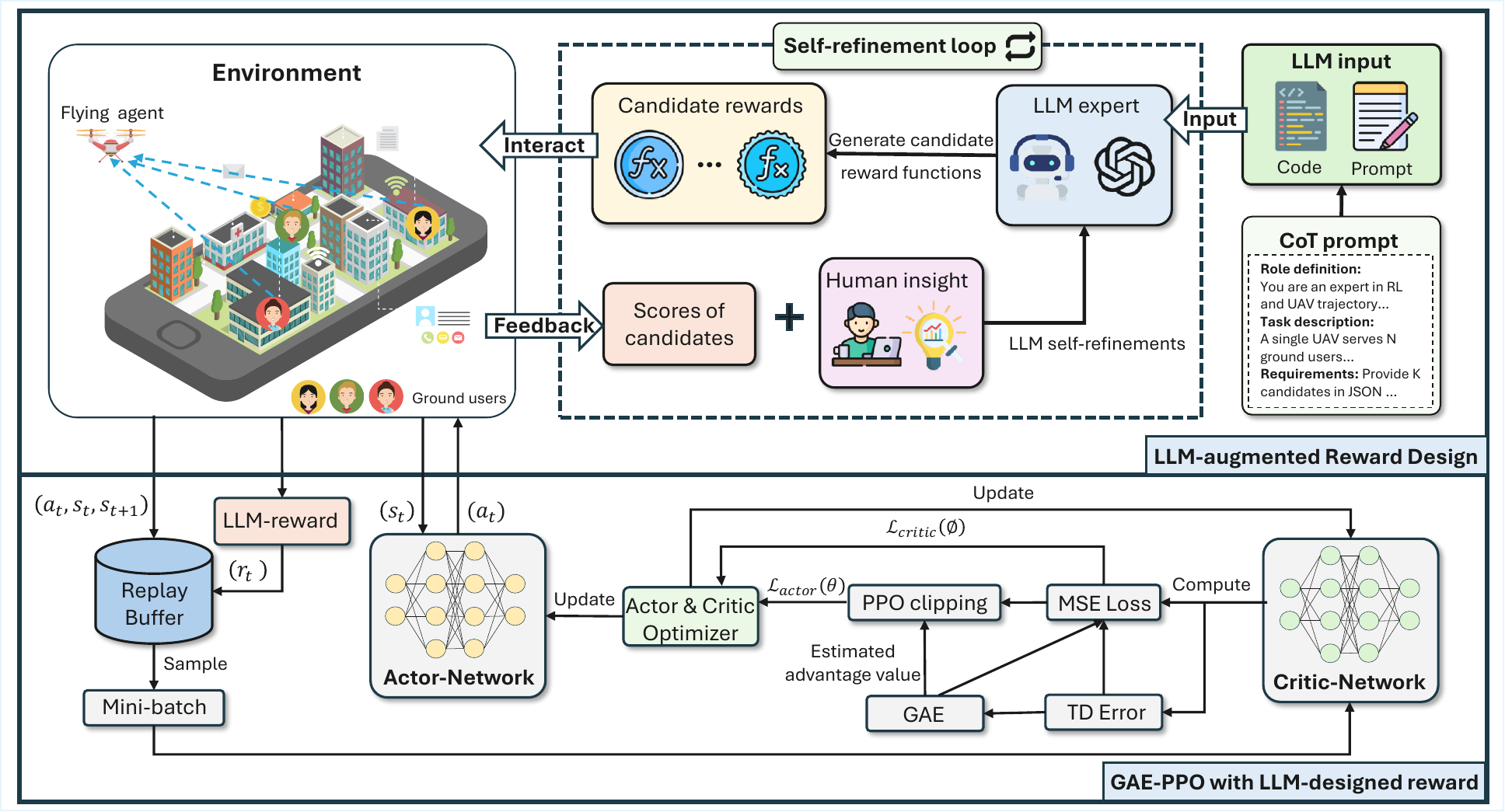}
    \caption{The workflow of LLaRA. The LLM-augmented reward design employs an LLM expert to generate and iteratively refine the candidate reward functions. The GAE-PPO strategy updates the Actor and Critic networks using the feedback provided by the refined LLM-designed reward.}
    \label{fig:LLM-PPO}
\end{figure*}

\subsection{An overview of LLaRA}

Generally speaking, LLaRA integrates traditional DRL with LLM-augmented reward design to optimize UAV trajectories under dynamic LAENet conditions. 
Traditional DRL often suffers from heuristic reward designs, which can hinder convergence and limit generalization. LLaRA addresses this limitation by combining a stable DRL backbone with the reasoning and code generation capabilities of advanced LLMs. The overall workflow is illustrated in Fig.~\ref{fig:LLM-PPO}.

% From a high-level  , LLaRA operates as follows. The UAV trajectory optimization problem is formulated as a MDP, where the UAV sequentially adjusts its position to minimize the maximum latency among users while satisfying resource and mobility constraints.

% To solve this MDP, we adopt Proximal Policy Optimization (PPO)~\cite{schulman2017proximal} as the backbone algorithm due to its robustness in continuous control. PPO serves as the optimization engine that updates the actor–critic networks, while its performance is fundamentally enhanced by the reward functions generated and refined by the LLM expert.

Specifically, LLaRA proceeds in two separate stages. First, before deployment, we apply an LLM in an \textit{offline} reward design loop, as shown in the upper part of Fig.~\ref{fig:LLM-PPO}.
The employed LLM receives structured prompts containing the system information, and outputs candidate reward functions. These candidate rewards are then evaluated in the training environment, scored according to their effectiveness, and iteratively refined through an LLM self-improvement process~\cite{kwon2023reward,ma2023eureka}. By incorporating both performance feedback and optional human guidance, the LLM progressively improves the reward design toward better alignment with the optimization goals.

Second, once a refined reward function design is selected, it is incorporated into the Proximal Policy Optimization (PPO) backbone~\cite{schulman2017proximal} for trajectory learning. PPO alternates between data collection and policy update, with the clipped surrogate objective ensuring training stability. 
The advantage estimates used in PPO are computed via Generalized Advantage Estimation (GAE), which further improves sample efficiency and variance reduction. Importantly, because the LLM-assisted reward generation occurs offline, no additional inference latency is introduced during online UAV operation. During deployment, trajectory decisions are obtained by a simple forward pass of the trained actor network, ensuring that the method remains efficient in real-time settings~\cite{11049053}.

\subsection{LLM-augmented Reward Design.}
We present a novel LLM-augmented reward design scheme,
where the LLM acts as an automated reward design expert. Our approach employs a Chain-of-Thought (CoT)-enhanced prompt engineering strategy~\cite{cai2025large} to guide the LLM in understanding our system model, identifying optimization factors, and generating executable codes for candidate reward functions. These candidate functions are then iteratively evaluated, refined, and integrated into our GAE-PPO method. In the following, we provide a detailed description of the whole process, covering prompt construction, reward generation, evaluation, and iterative refinement. Fig.~\ref{fig:prompt_design} presents an instance of LLM prompts used for reward generation and subsequent refinement.

\subsubsection{Prompt Engineering for Reward Design}
To guide the LLM in effectively designing reward functions, the first step is to ensure that the LLM understands its role properly. Hence, we adopt a CoT-enhanced prompt structure consisting of the following elements:

\textbf{Role definition.} Firstly, the LLM is specified as a professional reward designer with expertise in DRL and LAENets. Its tasks include: (i) understanding the underlying system model, (ii) reasoning over observations and agent–environment interactions, and (iii) generating executable Python code for reward functions. Normative constraints are also imposed, such as prohibiting assumptions based on ungiven information and prioritizing the most relevant factors in the reward to mitigate potential LLM hallucinations~\cite{ji2023survey}. To facilitate seamless integration with our PPO pipeline, the response format is also required to be standardized (e.g., in JSON), enabling automated validation and execution.

\textbf{Task description.} Task description is to help LLM grasp a comprehensive understand on the studied problem, including background, system model, and objective function, etc. To bridge the gap between natural language and programmatic understanding, we further embed code snippets (e.g., MDP design) alongside textual descriptions as a unified input to LLM. This hybrid expression motivates LLM to focus on task-relevant contextual information, while reducing the likelihood of producing overly generic reward functions.

\subsubsection{Initial Reward Function Generation} With our crafted prompts (i.e., role definition and task description), the LLM generates a set of candidate reward functions $\mathcal{R}_{\mathrm{LLM}}$:
\begin{align}
    \mathcal{R}_{\mathrm{LLM}}=F_{\mathrm{Gen}}(\mathcal{P}, \mathcal{C}; \Theta),\label{equa:LLM_generation}
\end{align}
where $F_{\mathrm{Gen}}(\cdot)$ is the function for generating reward functions from the prompt, $\mathcal{P}$ denotes the textual part in role definition and task description, $\mathcal{C}$ denotes the embedded code snippets, and $\Theta$ is the LLM parameter set. Specifically, each candidate reward $R_i \in \mathcal{R}_{\mathrm{LLM}}$ should satisfy the predefined return type and align with the optimization problems (e.g., incorporating both objectives and penalties for constraint violations).

The obtained candidate reward functions $\mathcal{R}_{\mathrm{LLM}}$ can be evaluated through interactions with the devised environment. In detail, their effectiveness are quantified by a score set $\mathcal{S}_{\mathrm{LLM}}$:
\begin{align}
    \mathcal{S}_{\mathrm{LLM}} = F_{\mathrm{Eval}}(\mathcal{R}_{\mathrm{LLM}}; \mathcal{E}, \Phi),\label{equa:evaluation}
\end{align}
where $F_{\mathrm{Eval}}(\cdot)$ is the function for evaluation, $\mathcal{E}$ denotes the DRL environment, and $\Phi$ represents the evaluation configuration (e.g., training episodes and performance metrics). 
Each score $S_i \in \mathcal{S}_{\mathrm{LLM}}$ refers to the performance of candidate $R_i \in \mathcal{R}_{\mathrm{LLM}}$, and helps to guide refinements to reward functions.

\subsubsection{Iterative Self‑Refinement} To further improve the quality of generated reward functions, we introduce a self‑refinement strategy that allows the LLM to iteratively make improvements to reward functions $\mathcal{R}_{\mathrm{LLM}}$. Specifically, the self-refinement prompt fed to the LLM will include: (i) evaluation results of all candidate functions $\mathcal{S}_{\mathrm{LLM}}$, and (ii) human insights $\mathcal{H}$ (optional) to guide the refinements. The refinement process is as follows:
\begin{equation}
    \mathcal{R}_{\mathrm{LLM}}' = F_{\mathrm{Ref}}(\mathcal{S}_{\mathrm{LLM}}, \mathcal{H}; \mathcal{R}_{\mathrm{LLM}}, \Theta),\label{equa:refinements}
\end{equation}
where $\mathcal{R}_{\mathrm{LLM}}'$ denotes the set of refined candidates, and $F_{\mathrm{Ref}}(\cdot)$ refers to the function for refinements. In each iteration, the LLM is re‑prompted with the evaluation results $\mathcal{S}_{\mathrm{LLM}}$, focusing on refining the top‑performing candidates. Human preferences are optional but particularly valuable when multiple candidates in $\mathcal{R}_{\mathrm{LLM}}$ exhibit comparable performance, as insights provide additional guidance for steering the LLM toward more effective reward designs.

After going through a certain number of refinement rounds as shown in \eqref{equa:refinements}, we can perform a final evaluation and select the best‑performing reward function $R_{\mathrm{LLM}}^*$:
\begin{equation}
    R_{\mathrm{LLM}}^* = \arg\max_{R_i \in \mathcal{R}_{\mathrm{LLM}}'} S_i,\label{equa:best_score}
\end{equation}
and the selected $R_{\mathrm{LLM}}^*$ is adopted as the final reward function of LLaRA. We summarize the overall process in Algorithm~\ref{Algorithm:LLaRA}.

\begin{figure}[t]
    \centering
    \includegraphics[width =\linewidth]{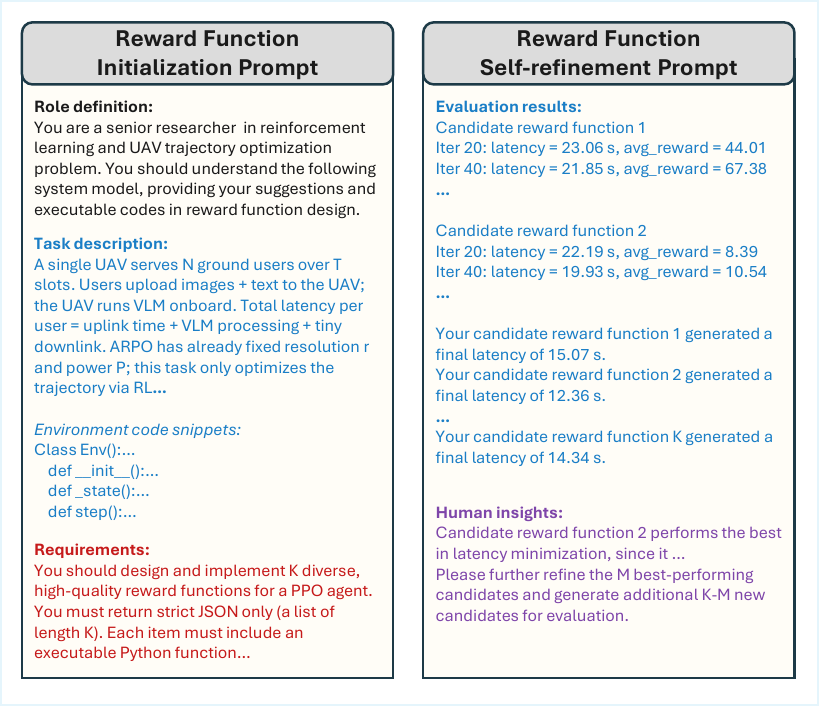}
    \caption{Instance prompts used in the initialization and evolution of our LLM-assisted reward design.}
    \label{fig:prompt_design}
\end{figure}

\begin{algorithm}
\caption{LLM-augmented Reward Design} 
\label{Algorithm:LLaRA}
% \footnotesize
\KwIn{
    Role definition $\mathcal{P}_{\mathrm{role}}$, 
    Task description $\mathcal{P}_{\mathrm{task}}$, 
    Requirements $\mathcal{P}_{\mathrm{req}}$,
    Code snippets $\mathcal{C}$,
    \newline
    DRL environment $\mathcal{E}$, 
    Evaluation policy $\Phi$,
    \newline
    Human insights $\mathcal{H}$, 
    LLM parameters $\Theta$;
}
\KwOut{Final reward function $R_{\mathrm{LLM}}^*$;}

Construct the CoT-enhanced prompt $\mathcal{P}$: $\mathcal{P} \leftarrow (\mathcal{P}_{\mathrm{role}}, \mathcal{P}_{\mathrm{task}},\mathcal{P}_{\mathrm{req}})$;

Generate a set of candidate reward functions $\mathcal{R}_{\mathrm{LLM}}$ via~\eqref{equa:LLM_generation} by feeding $\mathcal{P}$ and $\mathcal{C}$ into the LLM;
% $\mathcal{R}_{\mathrm{LLM}} \leftarrow F_{\mathrm{Gen}}(\mathcal{P}, \mathcal{C}; \Theta)$

Evaluate $\mathcal{R}_{\mathrm{LLM}}$ by interactions with environment $\mathcal{E}$ and policy $\Phi$, obtain a score set $\mathcal{S}_{\mathrm{LLM}}$ via \eqref{equa:evaluation};

\While{termination criterion not met}{
    Given $\mathcal{S}_{\mathrm{LLM}}$ and human insights $\mathcal{H}$, LLM refines the candidates function into $\mathcal{R}_{\mathrm{LLM}}'$ via \eqref{equa:refinements};

    Evaluate refined $\mathcal{R}_{\mathrm{LLM}}'$ via \eqref{equa:evaluation} to obtain a new score set $\mathcal{S}_{\mathrm{LLM}}$;

    Update candidates $\mathcal{R}_{\mathrm{LLM}}$ with new $\mathcal{R}_{\mathrm{LLM}}'$;
}
Select the best-performing $\mathcal{R}^*_{\mathrm{LLM}}$ via \eqref{equa:best_score};

\Return{$R_{\mathrm{LLM}}^*$}

\end{algorithm}

\subsection{MDP Formulation with LLM-designed Rewards}

We formulate problem $\mathbb{P}_3$ as an MDP to facilitate the RL solution. Specifically, the MDP is often defined by a tuple $\langle \mathcal{S}, \mathcal{A}, \mathcal{R}, \mathcal{M}, \rho \rangle$, where $\mathcal{S}$ denotes for the state space, $\mathcal{A}$ the action space, $\mathcal{R}$ the reward design, $\mathcal{M}$ the state transition model, $\rho\in(0,1)$ the discount factor. Among these elements, $\mathcal{S}$, $\mathcal{A}$, and $\mathcal{R}$ are pivotal in the agent’s learning process, and their definitions are detailed below.

\subsubsection{State Space $\mathcal{S}$}\label{subsec:state}
We design the state space $\mathcal{S}$ to contain the key environmental factors for agent decision-making. 
% Specifically, it includes the coordinates of the UAV and ground users, resolutions, powers, channel states and remaining data size for transmission. 
To better describe spatial relationships, we adopt relative positions to represent all positional relations in our system. Accordingly, the state $\mathbf{s}_t$ of the UAV at time slot $t$ is defined as follows:
\begin{align}
\mathbf{s}_t =
\Bigl[
&
\underbrace{\{(x[t]\!-\!x_n,y[t]\!-\!y_n,z[t]\!-\!h_n)\}_{n\in\mathcal{N}}}_{\text{UAV-user relative positions}};~\{r_n\}_{n\in \mathcal{N}};\nonumber\\
&\hspace{10pt}
\{P_n\}_{n\in \mathcal{N}};~\{h_n[t]\}_{n\in \mathcal{N}};~\{d_n[t]\}_{n\in \mathcal{N}}
\Bigr], \quad \forall t.\label{equa:state}
\end{align}
$\{r_n\}_{n\in \mathcal{N}}$ denotes the image resolutions; $\{P_n\}_{n\in \mathcal{N}}$ is the transmit power; $\{h_n[t]\}_{n\in \mathcal{N}}$ is the channel gain at slot $t$; $\{d_n[t]\}_{n\in \mathcal{N}}$ is the remaining data sizes waiting for transmission.

\subsubsection{Action Space $\mathcal{A}$}\label{subsec:action}
The action space $\mathcal{A}$ is directly related to optimization variables. After obtaining the state information $\mathcal{S}$, the UAV needs to plan its next move based on the policy distribution. The action $\mathbf{a}_t$ at time slot $t$ consists of:
\begin{equation}
\mathbf{a}[t] = \left\{(\Delta x[t], \Delta y[t], \Delta z[t])\right\},\quad \forall t,
\end{equation}
where $(\Delta x[t], \Delta y[t], \Delta z[t])$ denotes a movement vector for the UAV mobility. It is bounded by predefined values to comply with related constraints \eqref{constraint:altitude}-\eqref{constraint:speed_z} as follows:
\begin{subequations}
\begin{numcases}{}
h^{\min} \leq z[t] \leq h^{\max},\quad \forall t,\\
\sqrt{(\Delta x[t])^2+(\Delta y[t])^2} \leq \alpha \cdot V_{xy}^{\max},\quad \forall t, \\
|\Delta z[t]| \leq \alpha \cdot V_z^{\max},\quad \forall t.
\end{numcases}
\end{subequations}

\subsubsection{LLM-designed Reward $\mathcal{R}_{\mathrm{LLM}}$}\label{subsec:reward}
Reward design is central to an MDP, as it evaluates state–action pairs and guides the agent toward the optimal policy. In this work, we utilize a ``risk-aware'' reward function $R_{\mathrm{LLM}}^{\mathrm{risk}}$ refined by the LLM
% ChatGPT-4o~\cite{hurst2024gpt} 
and detail its expression as follows:
\begin{align}
    R_{\mathrm{LLM}}^{\mathrm{risk}}=
    &-\mathrm{VaR}_{q}\big(\mathbf{d}[t]\big)+\mu \sum_{n \in \mathcal{N}} \min\{ d_n[t],\, \alpha R_n[t] \} \nonumber\\
    &+ \gamma_{d} \cdot \Delta\mathrm{dist}[t],\label{equa:LLM_reward}
\end{align}
where $\mathrm{VaR}_{q}\big(\mathbf{d}[t]\big)=\inf\!\left\{\tau:\; \frac{1}{|\mathcal{N}|}\sum_{n\in\mathcal{N}}\mathbf{1}\{d_n[t]\le \tau\}\ge q\right\}$ is the empirical $q$-quantile of user backlogs, penalizing the tail and aligning with the worst-case latency in \eqref{Trajectory_Optimization}; the second term $\sum_{n\in\mathcal{N}}\min\{d_n[t],\,\alpha R_n[t]\}$ counts only transmitted data from unfinished users to preserve overall system efficiency; and $\Delta\mathrm{dist}[t]=\|(\mathbf{q}[t],z[t])-(\mathbf{w}_{n_t},h_{n_t})\|_2-\|(\mathbf{q}[t{+}1],z[t{+}1])-(\mathbf{w}_{n_t},h_{n_t})\|_2$ with $n_t=\arg\max_{n} d_n[t]$ rewards motion toward the bottleneck user. Here $q\in(0,1)$ and $\mu,\gamma_d\ge 0$ are weights.

Notably, $R_{\mathrm{LLM}}^{\mathrm{risk}}$ designed by LLM includes not only a tail-aware term $\mathrm{VaR}_{q}\big(\mathbf{d}[t]\big)$ motivated by \eqref{Trajectory_Optimization}, but also incorporates potentially influential components on overall effective throughput and the distance change between the bottleneck user and UAV. This formulation can help the agent to observe its action values from a more comprehensive view, and hence increase the convergence performance of the learned policy.

\section{Complexity Analysis}\label{sec:complexity}
Here we analyze the computational complexity of ARPO and LLaRA in our optimization framework, and discuss the overall complexity in the real-time decision-making process.

% namely the ARPO algorithm for resolution–power optimization and the LLaRA approach for UAV trajectory optimization.

\subsection{Complexity of ARPO}
The ARPO algorithm optimizes resolution selection $\mathbf{r}$ and power allocation $\mathbf{P}$ in an alternating fashion. Firstly, the optimal $\mathbf{r}^*$ is determined using a B\&B method over the discrete set $\mathcal{R}^{\mathrm{res}}$. Let $N$ denote the number of users and $J$ the number of candidate resolutions. In the worst case, the computational complexity of this search is $\mathcal{O}\big(J^N\big)$~\cite{morrison2016branch}, but the B\&B pruning mechanism often reduces the effective search space. If $L$ denotes the estimated number of explored branches, the practical time complexity is approximately $\mathcal{O}\big(L\cdot N\big)$. Given $\mathbf{r}^*$, the power allocation subproblem is convex and solved via a one-dimensional bisection search over $\tau$, yielding a complexity of $\mathcal{O}\big(N\cdot \log(1/\epsilon)\big)$, where $\epsilon$ is the convergence tolerance. Finally, an additional $\mathcal{O}\big(N\big)$ complexity accounts for the final back-substitution of $\tau^*$ to compute $\mathbf{P}^*$. Hence, the overall complexity of ARPO is $\mathcal{O}\big((\log(1/\epsilon)+L+1)\cdot N\big)$.

\subsection{Complexity of LLaRA}
The complexity of the LLaRA algorithm consists of two main components: the complexity of the GAE-PPO backbone, and the complexity of LLM-augmented reward design.

% \subsubsection{Training Complexity}
% Let $T$ denote the length of time steps, $M$ the number of sampled trajectories per update, and $K$ the number of policy updates per iteration. The training complexity of the PPO backbone is
% $\mathcal{O}\big(K \cdot M \cdot T \cdot \sum_{p=1}^{P} n_{p-1} n_p\big)$, where $n_p$ denotes the number of neurons in the $p$-th layer and $P$ is the total number of layers in the actor–critic networks. In detail, the input layer size $n_0$ and output layer size $n_P$ correspond to the dimensions of the state space $|\mathcal{S}|$ and the action space $|\mathcal{A}|$, respectively.

\subsubsection{GAE-PPO Complexity}
The complexity of GAE-PPO is primarily determined by the architecture of the deep neural networks (DNNs) used in the actor and critic networks. The complexity of DNNs is expressed as $\mathcal{O}\big(\sum_{p=1}^{P} n_{p-1} n_p \big)$, where $n_p$ denotes the number of neurons in the $p$-th layer and $P$ is the total number of layers in the actor–critic networks. In addition, GAE introduces an additional complexity of $\mathcal{O}\big(M\big)$ per iteration, where $M$ denotes the number of time steps used in advantage computation~\cite{11049053}. Hence, the overall complexity of GAE-PPO is $\mathcal{O}\big(M \cdot \sum_{p=1}^{P} n_{p-1} n_p \big)$.

% After training, the policy is deployed for online decision-making, where only a single forward pass through the actor network is required. Hence, the per-decision inference complexity is
% $\mathcal{O}\big(\sum_{p=1}^{P} n_{p-1} n_p \big)$, which is significantly lower than the training cost and suitable for real-time execution.

\subsubsection{LLM-Enhanced Reward Design Complexity}
The reward design involves prompting the LLM and evaluating candidate reward functions. Let $K$ denote the number of reward candidates, $I$ denote the number of refinement rounds, and $C_{\mathrm{LLM}}$ denote the average cost of one LLM call. Then, the overall complexity is $\mathcal{O}\big(I \cdot K \cdot (C_{\mathrm{LLM}} + C_{\mathrm{eval}})\big)$, where $C_{\mathrm{eval}}$ is the cost of evaluating one candidate in the DRL environment.
Since both LLM calls and evaluations are performed before deployment, its impact on real-time deployment is negligible.

\textbf{Overall Complexity in Real-time Decision-Making.} 
The major computational overhead of the proposed ARPO-LLaRA framework arises from the training of the PPO backbone and the LLM-assisted reward refinement. However, these phases are performed offline and prior to deployment. During real-time operation, the decision-making process only involves ARPO and a single forward pass of LLaRA, bringing an overall time complexity of $\mathcal{O}\big((\log(1/\epsilon)+L+1)\cdot N+M \cdot \sum_{p=1}^{P} n_{p-1} n_p \big)$.

\section{Experiments}\label{sec:experiments}
We conduct extensive experiments to validate the proposed framework. Section~\ref{subsec:settings} details experimental parameter settings; 
Section~\ref{subsec:accuracy} quantifies the impact of input resolution on VLM accuracy and efficiency; Sections~\ref{subsec:comparative_analysis} to \ref{subsec:power_bandwidth} present different simulation results.

% and \ref{subsec:multi-episode} analyze its scalability under various scenarios.

\subsection{Experimental Settings}\label{subsec:settings}

\subsubsection{VLM Settings} To investigate how input image resolution affects the performance of VLMs, we conduct an empirical study on two representative architectures: LLaVA-1.5~\cite{liu2024improved} and its high-resolution variant LLaVA-HR~\cite{luo2024feast}. Both models are evaluated on the TextVQA benchmark~\cite{singh2019towards}, which is a widely used dataset designed to evaluate multimodal inference and aligns well with our considered LAENet users' tasks. Interested  readers can refer to \url{https://github.com/luogen1996/LLaVA-HR} for more technical details of LLaVA models.

\subsubsection{Reward-Design Settings}\label{subsec:reward_settings}
Reward design is performed offline with the help of an LLM: We prompt GPT-4o~\cite{hurst2024gpt} to synthesize and refine executable reward candidates (code/JSON). Specifically, we use GPT-4o because prior work has verified that it reliably produces executable reward code~\cite{xie2025erfsl} and can serve as an automatic monitor in RL reward pipelines~\cite{baker2025monitoring}.

\subsubsection{Parameter Settings}
Here we give the default parameter settings, primarily adapted from~\cite{zhan2020joint}. Specifically, we consider a UAV-enabled LAENet with $N=4$ ground users distributed within an $1000\times1000~\mathrm{m}^2$ square area, facilitating clear visualization in square-shaped figures. The UAV's initial location is set as $(-500,-500,150)$. The minimum and maximum allowable altitudes for the UAV, i.e., $h^{\min}$ and $h^{\max}$, are set as $50~\mathrm{m}$ and $300~\mathrm{m}$, respectively. Ground users are at altitudes $h_n=0~\mathrm{m}$. The elemental time slot length $\alpha$ is $1~\mathrm{s}$ and the number of slots $T$ is $50$. The UAV's maximum allowable horizontal and vertical speeds, i.e., $V_{xy}^{\max}$ and $V_{z}^{\max}$, are $100~\mathrm{m/s}$ and $20~\mathrm{m/s}$, respectively.
For the communication model parameters, we adopt a standard large-scale path-loss model with exponents $\gamma_{\mathrm{LoS}}$ and $\gamma_{\mathrm{NLoS}}$ as $2$, parameters $(a,b)=(4.88,\,0.43)$, reference channel gain $\beta_0 = -50~\mathrm{dB}$, and noise power $\sigma^2 = -100~\mathrm{dBm}$. Each user is allocated a bandwidth of $B_n = 1~\mathrm{MHz}$ and a maximum transmit power of $P_n^{\max} = 0.1~\mathrm{W}$. The downlink time $T_n^{\text{down}}$ is set as $0.1$ s.

\subsection{Empirical Study on Resolution-Aware Model Performance}\label{subsec:accuracy}

To better illustrate the impact of visual input resolution on VLM performance, we conduct both qualitative and quantitative analyses. 
Fig.~\ref{fig:LLaVA} summarizes our empirical study.

\begin{figure*}[t]
    \centering
    \includegraphics[width =.9\linewidth]{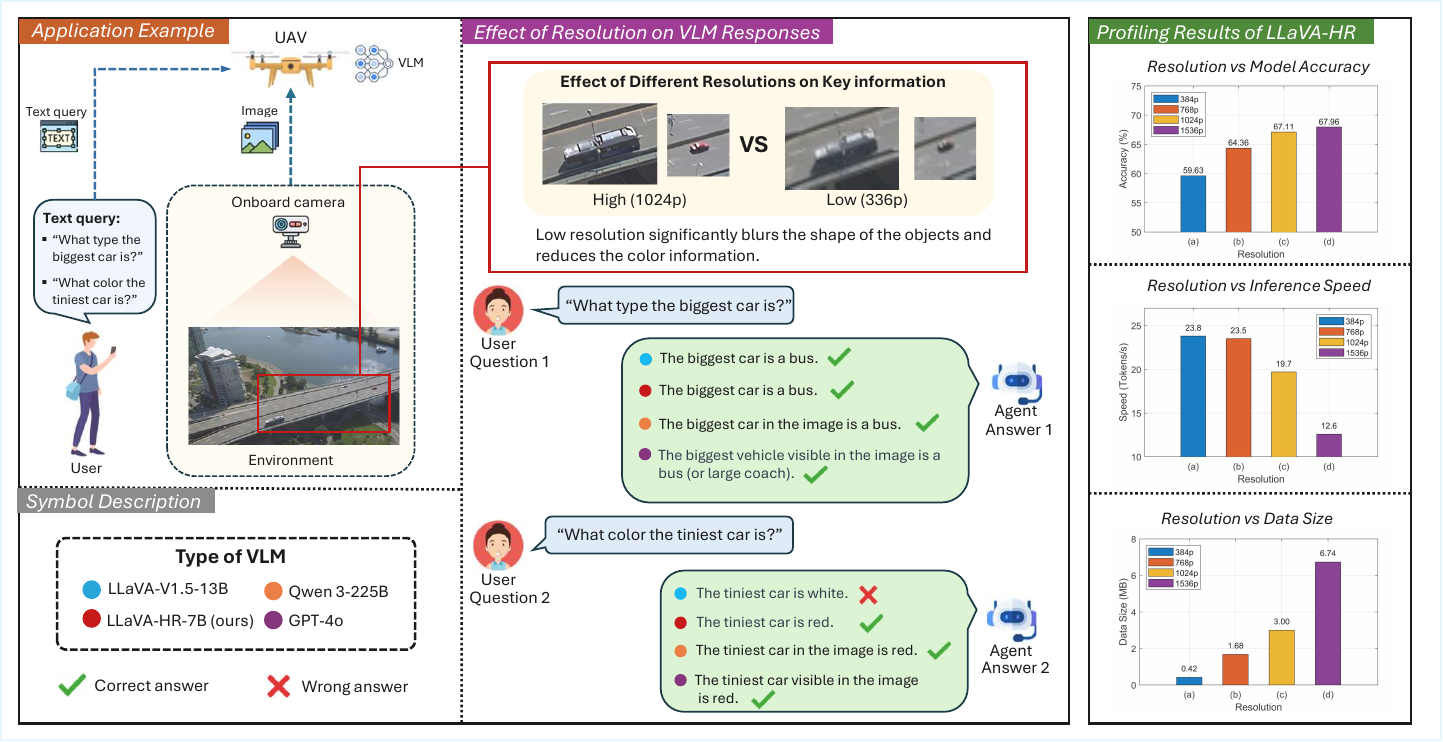}
    \caption{Impact of input resolution on TextVQA. 
Left and middle: our reproduced experiments using LLaVA-HR show that high-resolution inputs (e.g., $1024\mathrm{p}$) preserve fine details such as the tiny ``red'' car, enabling correct answers that are lost when downsampled to $336\mathrm{p}$. 
Notably, LLaVA-HR-7B achieves accuracy comparable to much larger models (Qwen3-225B, GPT-4o). 
Right: published profiling results of LLaVA-HR~\cite{luo2024feast}, illustrating the accuracy–efficiency–size trade-off across resolutions.}
    \label{fig:LLaVA}
\end{figure*}

First, we conduct a TextVQA case study to examine how input resolution influences the quality of answers produced by diverse VLMs, including LLaVA-1.5-13B, LLaVA-HR-7B, Qwen-3-225B~\cite{yang2025qwen3} and ChatGPT-4o. 
As shown in the left and middle panels of Fig.~\ref{fig:LLaVA}, the $1024\mathrm{p}$ image preserves fine-grained details such as the tiniest car's true color,
whereas downsampling to $336\mathrm{p}$ blurs small objects and reduces color information. 
This resolution gap directly affects the reasoning performance of VLMs: for \emph{Question 1} (``What type the biggest car is?''), all models consistently answer “bus,” since the large object remains identifiable at both resolutions.
However, for \textit{Question~2} (``What color the tiniest car is?''), only models that can process higher-resolution inputs are able to give the correct answer. 
LLaVA-1.5-13B defaulting to $336\mathrm{p}$ fails on this task, while LLaVA-HR-7B successfully outputs the correct answer. 

Notably, the comparison shows that LLaVA-HR-7B achieves answer quality comparable to larger models such as Qwen3-225B and GPT-4o, despite its significantly smaller size. 
This suggests that enabling high-resolution visual input offers a more efficient path to improving answer accuracy than simply scaling up model parameters. Given that our target application lies in UAV-enabled LAENet, where onboard storage and computational resources are inherently limited, we adopt LLaVA-HR as the reference VLM in our subsequent analysis. 

To complement the quantitative analysis, we further include the profiling results of LLaVA-HR~\cite{luo2024feast}, as shown in the right panel of Fig.~\ref{fig:LLaVA}. These results provide a system-level view of the resolution-accuracy–efficiency trade-off. 
Specifically, accuracy improves from 59.63\% at $384\mathrm{p}$ to 67.96\% at $1536\mathrm{p}$, 
while inference speed decreases from $23.8$ to $12.6$ tokens/s and input size grows from $0.42$ MB to $6.74$ MB. To support resolution-aware optimization in our system design, we define empirical lookup functions for both accuracy and inference speed. The resolution-dependent accuracy function $A_n(r_n)$ is modeled as:
\begin{align}
    A_n(r_n) =
    \begin{cases}
    59.63\%, & \text{if } r_n = 384\mathrm{p}, \\
    64.36\%, & \text{if } r_n = 768\mathrm{p}, \\
    67.11\%, & \text{if } r_n = 1024\mathrm{p}, \\
    67.96\%, & \text{if } r_n = 1536\mathrm{p}. \\
    \end{cases}
\end{align}
Similarly, we have the model inference speed function $v(r_n)$ reported in~\cite{luo2024feast} as follows:
\begin{align}
    v(r_n) =
    \begin{cases}
    23.8~\mathrm{tokens/s}, & \text{if } r_n = 384\mathrm{p}, \\
    19.9~\mathrm{tokens/s}, & \text{if } r_n = 768\mathrm{p}, \\
    19.7~\mathrm{tokens/s}, & \text{if } r_n = 1024\mathrm{p}, \\
    12.6~\mathrm{tokens/s}, & \text{if } r_n = 1536\mathrm{p}, \\
    \end{cases}
\end{align}
These empirical mappings serve as key inputs to our designed optimization framework, enabling resolution selection based on the desired trade-off between accuracy and efficiency.

\subsection{Comparative Analysis with Different Baselines}\label{subsec:comparative_analysis}

To comparatively analyze our ARPO-LLaRA framework, we introduce three baselines. The first is a \textit{Random Policy (RP)}, where the agent selects resolutions, velocities, and powers arbitrarily, serving as a benchmark without optimization. The second is \textit{ARPO-Geometric Heuristic (ARPO-GH)}, which applies ARPO for resolution and power allocation and then directs the UAV toward the geometric center of users~\cite{daugacsan2023resilient}. The third is \textit{ARPO-PPO}, which integrates ARPO with a PPO backbone~\cite{schulman2017proximal} using manually crafted reward functions aim to minimize the backlog of the bottleneck user. This comparison highlights the advantages of incorporating LLM-assisted reward design in ARPO-LLaRA.

\textbf{Convergence Performance.} 
Fig.~\ref{fig:latency_comparison} compares the convergence performance of our ARPO-LLaRA method against the baselines. The shaded area denotes the variance, and the solid line denotes the mean in terms of the latency performance. It can be observed that ARPO-LLaRA constantly outperforms all other baselines over the entire training episodes, which validates the superiority of our hierarchical optimization framework. Compared with RP and ARPO-GH, ARPO-LLaRA achieves significant improvements of around $45\%$ and $25\%$ in latency performance, respectively. 
Even against the robust ARPO-PPO baseline with a manually designed reward function, ARPO-LLaRA delivers a $13.7\%$ improvement in latency reduction, owing to the effectiveness of proposed LLM-augmented reward design. 
By comparing RP and ARPO-GH, we can also demonstrate the effectiveness of the proposed ARPO algorithm. Moreover, ARPO-LLaRA demonstrates faster convergence and greater training stability, thanks to the LLM-designed reward function that incorporates multiple informative terms. 
Specifically, the LLM-designed reward function penalizes tail backlog (aligning with worst-case latency), emphasizes overall throughput from unfinished users, and prioritizes users who currently dominate the overall completion time, thereby guiding toward a stable optimal policy more efficiently.

% Specifically, the LLM-designed reward emphasizes tail latency, credits only data that actually reduces backlogs, and nudges the UAV toward the current bottleneck, thereby guiding toward a stable optimal policy more efficiently.

% {\color{blue}
% The LLM-designed reward prioritizes the user with the largest backlog
% }
% enhance overall throughput and account for bottleneck users, 

% thereby guiding toward a stable optimal policy more efficiently.

\begin{figure}
    \centering
    \includegraphics[width =.9\linewidth]{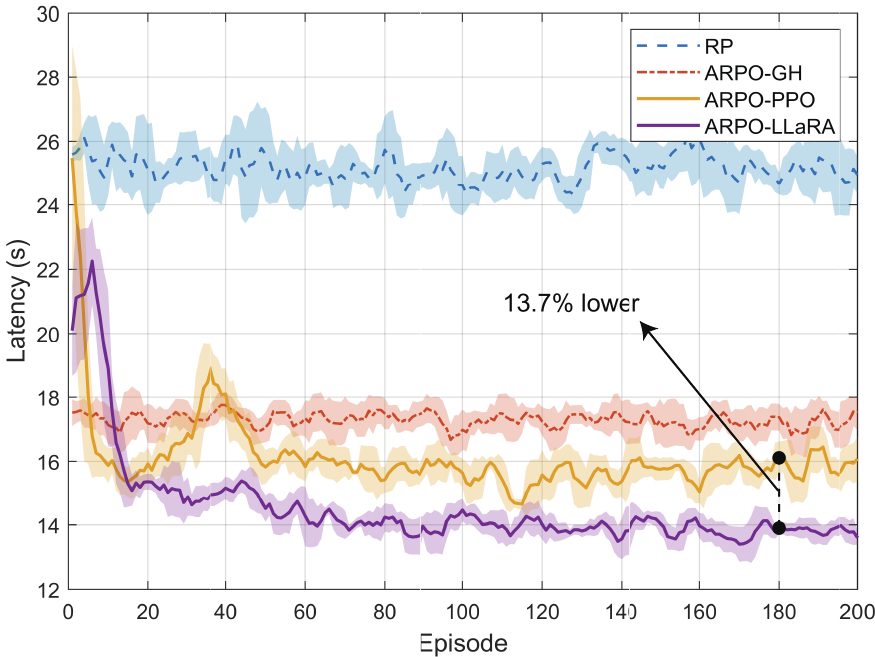}
    \caption{Convergence performance comparison between ARPO-LLaRA and different baselines.}
    \vspace{-10pt}
    \label{fig:latency_comparison}
\end{figure}

\textbf{Trajectory and Latency Performance.} To provide a clearer illustration of the method’s performance in deployment, Figs.~\ref{fig:trajectory_latency1} and \ref{fig:trajectory_latency2} compare UAV trajectories and latency under two user distributions comprising both high-demand and low-demand users.
Specifically, \textit{high-demand users} are defined as those issuing multiple queries ($M_n=2$) with stricter accuracy requirements ($A_n^{\min}=0.67$), whereas \textit{low-demand users} submit a single query ($M_n=1$) with a standard accuracy requirement ($A_n^{\min}=0.60$). Next, we detail our analysis on Figs.~\ref{fig:trajectory_latency1} and \ref{fig:trajectory_latency2}.

Fig.~\ref{fig:trajectory_latency1} examines performance under user distribution I, with two high-demand users located farther away in the northwest corner and two low-demand users positioned closer to the UAV start point. 
% It is observed in Fig.~\ref{fig:trajectory_latency1}(a) that both ARPO-PPO and ARPO-LLaRA tend to steer the UAV toward the high-demand users, as they constitute the bottleneck for minimizing overall system latency. In detail, ARPO-LLaRA brings a more stable and better trajectory from the visual view, but ARPO-PPO 
As shown in Fig.~\ref{fig:trajectory_latency1}(a), both ARPO-PPO and ARPO-LLaRA guide the UAV toward the high-demand users, since they dominate the system latency. Compared with ARPO-PPO, ARPO-LLaRA produces a smoother and more stable trajectory, avoiding unnecessary detours while still prioritizing bottleneck users. This suggests that the LLM-augmented reward design not only balances the UAV’s service priorities more effectively but also helps the agent converge toward a more efficient trajectory that minimizes latency with improved stability. The quantitative results in Fig.~\ref{fig:trajectory_latency1}(b) further confirm these observations. While RP suffers from the highest latency, ARPO-GH reduces latency by $27.4\%$. ARPO-PPO achieves an additional $10.8\%$ reduction, and ARPO-LLaRA yields the best performance with a further $11.7\%$ improvement over ARPO-PPO. These results highlight that the proposed ARPO method is effective in reducing latency, and the LLM-augmented reward design in ARPO-LLaRA provides a significant performance gain beyond manually designed rewards.

Fig.~\ref{fig:trajectory_latency2}(a) shows the UAV trajectory under user distribution II, where the high-demand users are located closer to the UAV start point and the low-demand users are farther away. Compared with distribution I, the UAV trajectories become shorter, as the bottleneck users can be reached more quickly. Both ARPO-PPO and ARPO-LLaRA still prioritize high-demand users, but ARPO-LLaRA maintains a smoother and more efficient path. The latency results in Fig.~\ref{fig:trajectory_latency2}(b) follow a similar trend as before. ARPO-GH achieves a notable latency reduction compared with RP, while ARPO-PPO brings further improvements. ARPO-LLaRA again achieves the lowest latency, with an additional $11.7\%$ gain over ARPO-PPO, demonstrating the consistent advantage of the LLM-augmented reward design.

% \begin{figure}
%     \centering
%     \includegraphics[width =.8\linewidth]{figs/trajectory_2D.eps}
%     \caption{.}
% \end{figure}

\begin{figure}[!t]
    \centering
    \begin{subfigure}[b]{0.49\linewidth}
        \centering
\includegraphics[width=\linewidth]{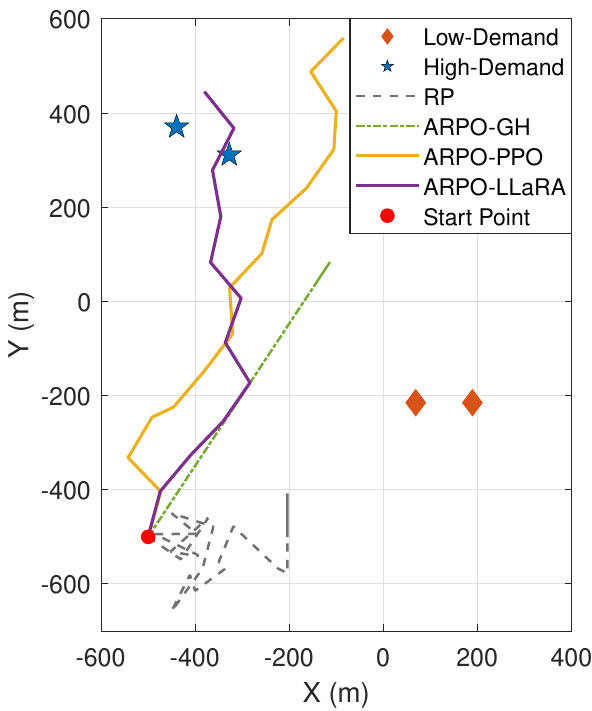}
    \caption{UAV horizontal trajectory.}
    \end{subfigure}
    \begin{subfigure}[b]{0.49\linewidth}
        \centering
\includegraphics[width=\linewidth]{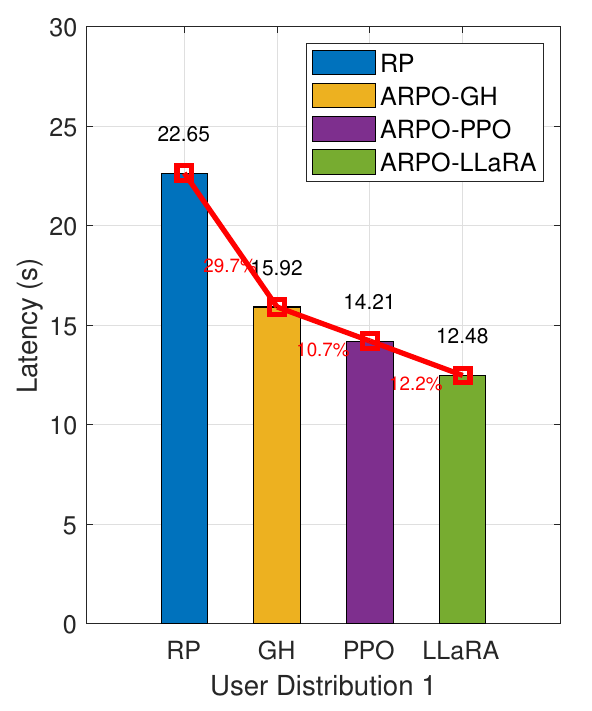}
    \caption{Achieved minimum latency.}
    \end{subfigure}
    \caption{Optimized UAV trajectory and minimum latency comparisons under user distribution I, 
    where high-demand users located farther and low-demand users closer.} 
    \label{fig:trajectory_latency1}
\end{figure}

\begin{figure}[!t]
    \centering
    \begin{subfigure}[b]{.49\linewidth}
        \centering
\includegraphics[width=\linewidth]{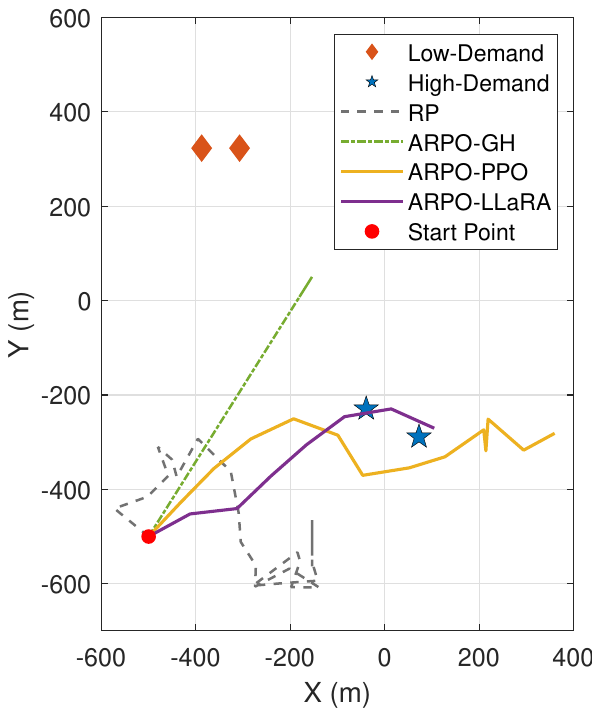}
    \caption{UAV horizontal trajectory.}
    \end{subfigure}
    \begin{subfigure}[b]{.49\linewidth}
        \centering
\includegraphics[width=\linewidth]{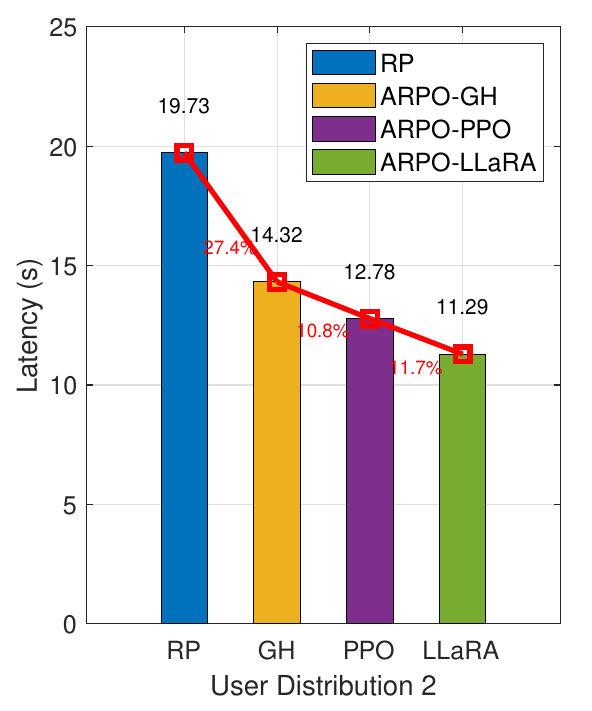}
    \caption{Achieved minimum latency.}
    \end{subfigure}
    \caption{Optimized UAV trajectory and minimum latency comparisons under user distribution II, 
    where high-demand users located closer and low-demand users farther.} 
    \label{fig:trajectory_latency2}
\end{figure}

\subsection{Impact of Weight Coefficients on Latency and Power}\label{subsec:trade_off}

We now examine how the weight coefficient influences the optimization result. Recall that in the objective function~\eqref{Original_Problem} of $\mathbb{P}_0$, the parameter $\zeta$ determines the relative importance assigned to power versus latency. When communication bandwidth is relatively abundant, or when the system is more sensitive to power consumption, increasing $\zeta$ encourages users to adopt lower transmit power.

We set the coefficient $\zeta$ within the range of $100$ to $1000$, while allocated bandwidth is set as $2$ MHz. The experimental result is reported in Fig.~\ref{fig:latency_power_zeta}. As illustrated, increasing $\zeta$ leads to a significant reduction in the total transmit power, which decreases from around $0.20$ W at $\zeta=100$ to below $0.07$ W at $\zeta=1000$. This trend evidently verifies that a larger weight on power effectively motivates users to transmit with smaller power levels. In contrast, latency exhibits the opposite trend: it grows from roughly $8.5$ s at $\zeta=100$ to nearly $10$ s at $\zeta=700$, after which it continues to increase more gradually. This occurs because reducing transmit power lowers the transmission rate, thereby increasing the overall latency. 

% In conclusion, Fig.~\ref{fig:latency_power_zeta} intuitively demonstrates that prioritizing power efficiency comes at the cost of longer overall latency. For practical applications, we can judiciously select the value of $\zeta$ based on the system parameters and requirements, achieving a trade-off between latency and power efficiency.

\begin{figure}
    \centering
    \includegraphics[width =.85\linewidth]{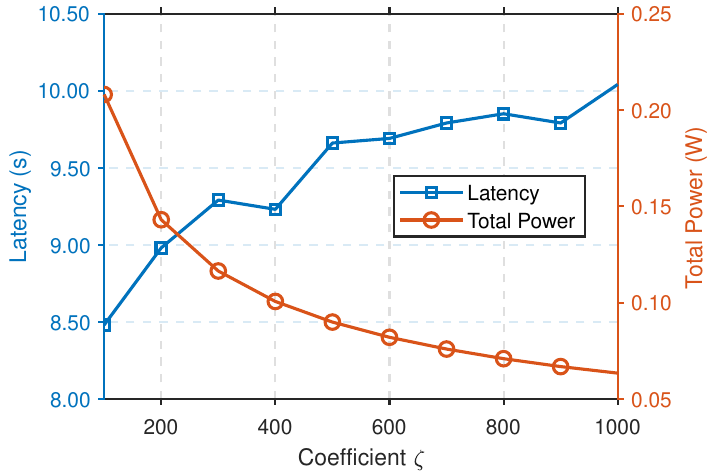}
    \caption{Overall latency and sum of user transmit powers under weight coefficient $\zeta$ with different values from $100$ to $1000$.}
    \label{fig:latency_power_zeta}
\end{figure}

\begin{figure*}[t]
    \centering
    \begin{subfigure}[b]{0.32\linewidth}
        \centering
\includegraphics[width=\linewidth]{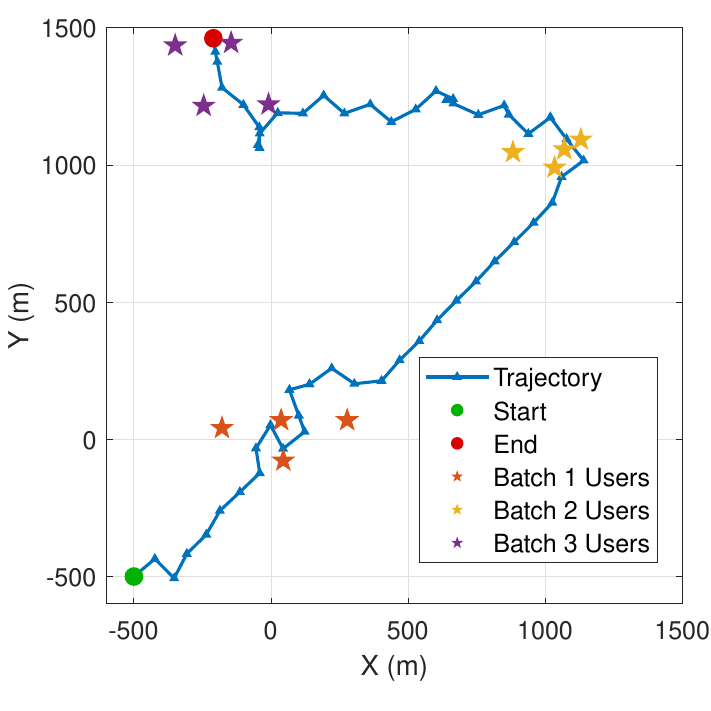}
    \caption{Top view of the UAV horizontal trajectory.}
    \end{subfigure}
    \hspace{50pt}
    \begin{subfigure}[b]{0.45\linewidth}
        \centering
\includegraphics[width=\linewidth]{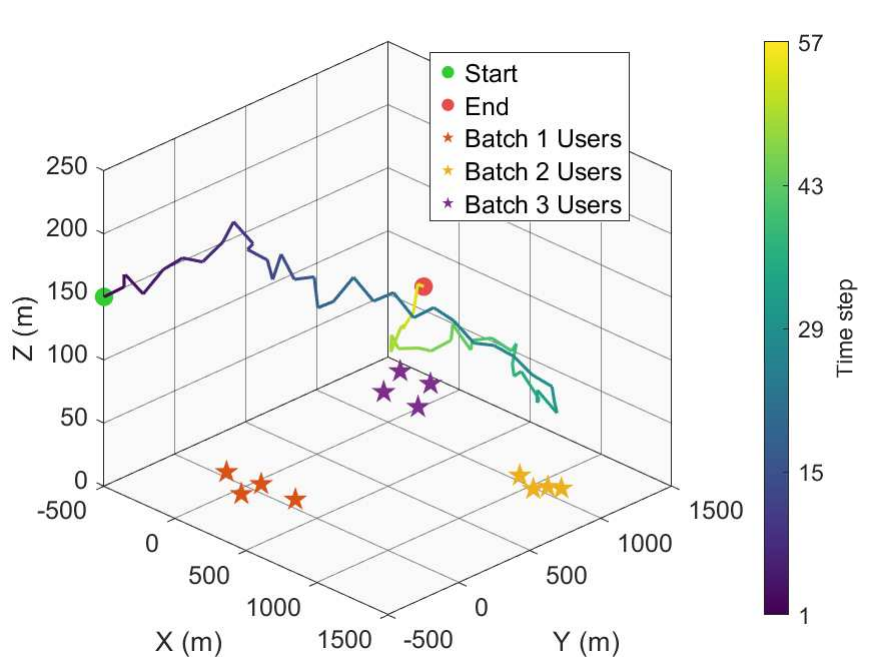}
    \caption{Visualization of the UAV 3D trajectory.}
    \end{subfigure}
    \caption{UAV trajectory performance for a multi-round service scenario. The UAV starts from the initial location, sequentially serves Batch 1, Batch 2, and Batch 3 users, and terminates at the final position.} 
    \label{fig:multi-batch}
\end{figure*}

\begin{figure}[t]
    \centering
    \includegraphics[width =.9\linewidth]{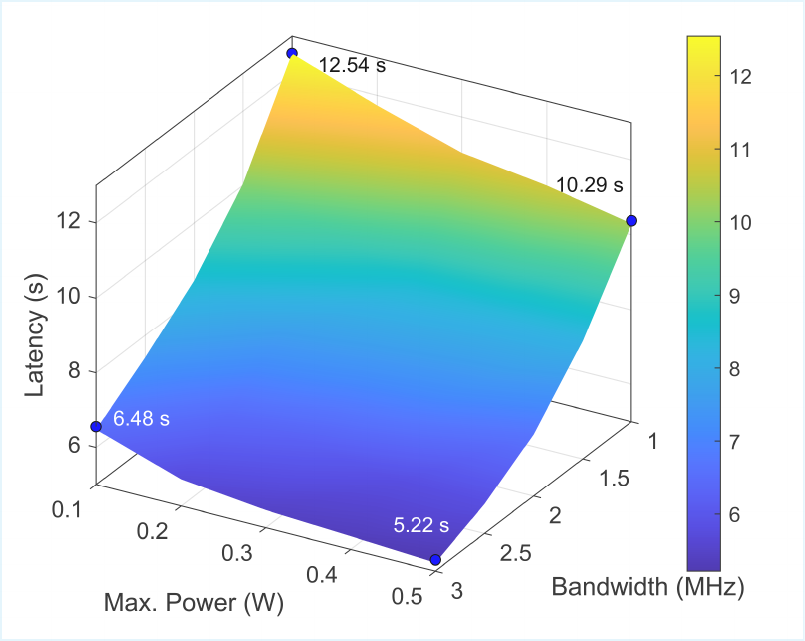}
    \caption{The overall latency versus different maximum transmit power and communication bandwidth.}
    \label{fig:latency_power_bandwidth}
\end{figure}

\subsection{Multi-round Optimization Performance for Multiple Batches}\label{subsec:multi-episode}

To further evaluate the adaptability of our proposed framework, we simulate a multi-round service scenario, where the UAV sequentially serves different batches of users. As shown in Fig.~\ref{fig:multi-batch}(a), the UAV first departs from the initial location (green dot) and completes service for Batch 1 users (orange stars). Afterwards, instead of returning to the initial point, it continues from the last position to serve Batch 2 users (yellow stars), and then proceeds to Batch 3 users (purple stars), eventually reaching the final destination (red dot). 

The 3D trajectory in Fig.~\ref{fig:multi-batch}(b) illustrates the UAV’s altitude variations and time progression, where the color bar indicates the time step. The results demonstrate that our proposed optimization framework can effectively plan the UAV trajectories across multiple service rounds, ensuring smooth transitions between spatially distributed users.

\subsection{Impact of Maximum Power and Bandwidth on Latency}\label{subsec:power_bandwidth}
To investigate how system resources affect the optimization performance, Fig.~\ref{fig:latency_power_bandwidth} shows the achieved latency under different maximum transmit power $P_n^{\max}$ and allocated bandwidth $B_n$. Intuitively, we can see that latency decreases as either bandwidth or transmit power increases, since both parameters directly enhance the achievable transmission rate. Specifically, bandwidth has a more dominant impact on the latency performance.
For instance, with a bandwidth of $3$ MHz and a maximum transmit power of $0.1$ W, the latency is reduced to $6.48$ s. In contrast, when bandwidth is constrained to $1$ MHz and transmit power is $0.5$ W, latency is reduced to $10.29$ s.
% These results highlight the importance of sufficient communication resources in achieving low-latency inference services in UAV-enabled LAENets.

\section{Conclusion}\label{sec:conclusion}
In this paper, we have proposed a UAV-enabled LAENet that leverages VLMs for onboard inference services. Then, we have formulated a joint optimization problem for latency and power efficiency and introduced a hierarchical framework combining ARPO and LLaRA to optimize resolutions, transmit powers, and UAV trajectories. Simulations have shown that our approach reduces latency while meeting accuracy requirements and scales well under diverse resource and service settings, highlighting its potential for practical inference-as-a-service in LAENets.

% \begin{figure}
%     \centering
%     \includegraphics[width =.8\linewidth]{figs/trajectory_2D.png}
%     \caption{The comparision of top-down trajectory between the PPO method and other baselines.}
% \end{figure}

% \begin{figure}
%     \centering
%     \includegraphics[width =\linewidth]{figs/trajectory_3D.png}
%     \caption{An example of the PPO-optimized 3D UAV trajectory in one episode.}
% \end{figure}

\bibliographystyle{IEEEtran}
\bibliography{ref}

\appendices

\ifCLASSOPTIONcaptionsoff
  \newpage
\fi

\end{document}